\documentclass{article}
\usepackage{arxiv}

\usepackage[utf8]{inputenc} 
\usepackage[T1]{fontenc}    
\usepackage{booktabs,multirow,tabularx} 
\usepackage{ragged2e}       
\usepackage{enumitem}       
\usepackage{amsfonts,amsmath,amssymb,mathtools,physics} 
\usepackage{nicefrac}       
\usepackage{microtype}      
\usepackage{graphicx}
\usepackage{natbib}
\usepackage{doi}
\usepackage{soul}

\usepackage{hyperref}       
\hypersetup{
    pdftitle={Divide and Cooperate: Role-Decomposed Multi-Agent LLM Training with Cross-Agent Learning Signals},
    pdfauthor={Jaewan Park, Solbee Cho, Jay-Yoon Lee},
    colorlinks=true,
    urlcolor=Blue,
    linkcolor=Blue,
    citecolor=Blue,
}
\usepackage{cleveref}       
\crefname{figure}{figure}{figures}
\crefname{Figure}{Figure}{Figures}
\usepackage{url}            

\usepackage[dvipsnames,table]{xcolor}   
\definecolor{mutedgreen}{HTML}{1b5e20}  
\definecolor{mutedred}{HTML}{b71c1c}    
\definecolor{neutralgray}{HTML}{424242} 
\newcommand{\perf}[1]{%
    \ifdim #1pt > 1.0pt \textcolor{mutedgreen}{#1} 
    \else\ifdim #1pt < -1.0pt \textcolor{mutedred}{#1} 
    \else \textcolor{black}{#1} 
    \fi\fi%
}

\usepackage{amsthm}
\newtheorem{definition}{Definition}
\newtheorem{theorem}{Theorem}

\newtheorem{proposition}{Proposition}

\DeclareMathOperator*{\argmax}{argmax}

\usepackage{algorithm}
\usepackage[endLComment=,italicComments=false]{algpseudocodex}
\algrenewcommand\algorithmicindent{3.5mm}

\usepackage{subcaption}
\usepackage{subfiles}

\usepackage{adjustbox}
\usepackage{tcolorbox}
\tcbuselibrary{listings,breakable}
\newtcolorbox[
    auto counter,
    list inside=prompt,
    crefname={prompt}{prompts},
]{prompt}[2][]{
    title={Prompt \thetcbcounter: #2},
    colbacktitle=black!60,
    coltitle=white,
    colframe=black!60,
    colback=lightgray!30,
    boxsep=5pt,
    left=3pt,
    right=3pt,
    top=3pt,
    bottom=3pt,
    boxrule=0pt,
    breakable,
    #1,
}

\title{Divide and Cooperate: Role-Decomposed Multi-Agent LLM Training with Cross-Agent Learning Signals}

\author{%
    Jaewan Park \\
    Seoul National University \\
    \href{mailto:jaejae1112@snu.ac.kr}{\color{black}\texttt{jaejae1112@snu.ac.kr}}
    \And
    Solbee Cho \\
    Seoul National University \\
    \href{mailto:sbcho0325@snu.ac.kr}{\color{black}\texttt{sbcho0325@snu.ac.kr}}
    \And
    Jay-Yoon Lee\thanks{Corresponding author} \\
    Seoul National University \\
    \href{mailto:lee.jayyoon@snu.ac.kr}{\color{black}\texttt{lee.jayyoon@snu.ac.kr}}
}

\date{}

\renewcommand{\headeright}{}
\renewcommand{\undertitle}{}
\renewcommand{\shorttitle}{Role-Decomposed Multi-Agent LLM Training with Cross-Agent Learning Signals}

\begin{document}
\maketitle

\begin{abstract}
    Modern language agents which perform multi-step reasoning have shown strong performance in knowledge-intensive question answering. However, existing approaches typically couple evidence acquisition and answer generation within a single policy. This forces a single model to play multiple potentially conflicting roles, inducing a combinatorial explosion in the policy space and hindering efficient exploration. It also introduces a credit assignment problem during training: a search action that retrieves sufficient evidence may still be penalized when generation fails, and vice versa.
    We propose \textbf{DAC} (Divide and Cooperate), a role-decomposed multi-agent training framework that divides agentic search into two cooperative subtasks, each handled by a dedicated agent trained with role-specific learning signals. The generator serves a dual role as both an answer producer and an evidence sufficiency verifier, abstaining when retrieved evidence is insufficient. This abstention signal is incorporated into the search agent's reward, providing structured cross-agent learning signals that improve credit assignment. Conversely, the searcher exposes the generator to diverse and challenging evidence environments by hard-positive evidence augmentation, improving its robustness.
    Experiments on general and multi-hop QA benchmarks show that DAC, implemented via parameter-efficient LoRA modules over a shared backbone, achieves strong performance against prior baselines that rely on full fine-tuning of monolithic models.
\end{abstract}

\keywords{Agentic AI \and Large Language Models \and Reinforcement Learning \and Multi-Agent Training}

\section{Introduction}
\label{sec:intro}
\begin{figure}[tb]
    \centering
    \includegraphics[width=0.99\textwidth]{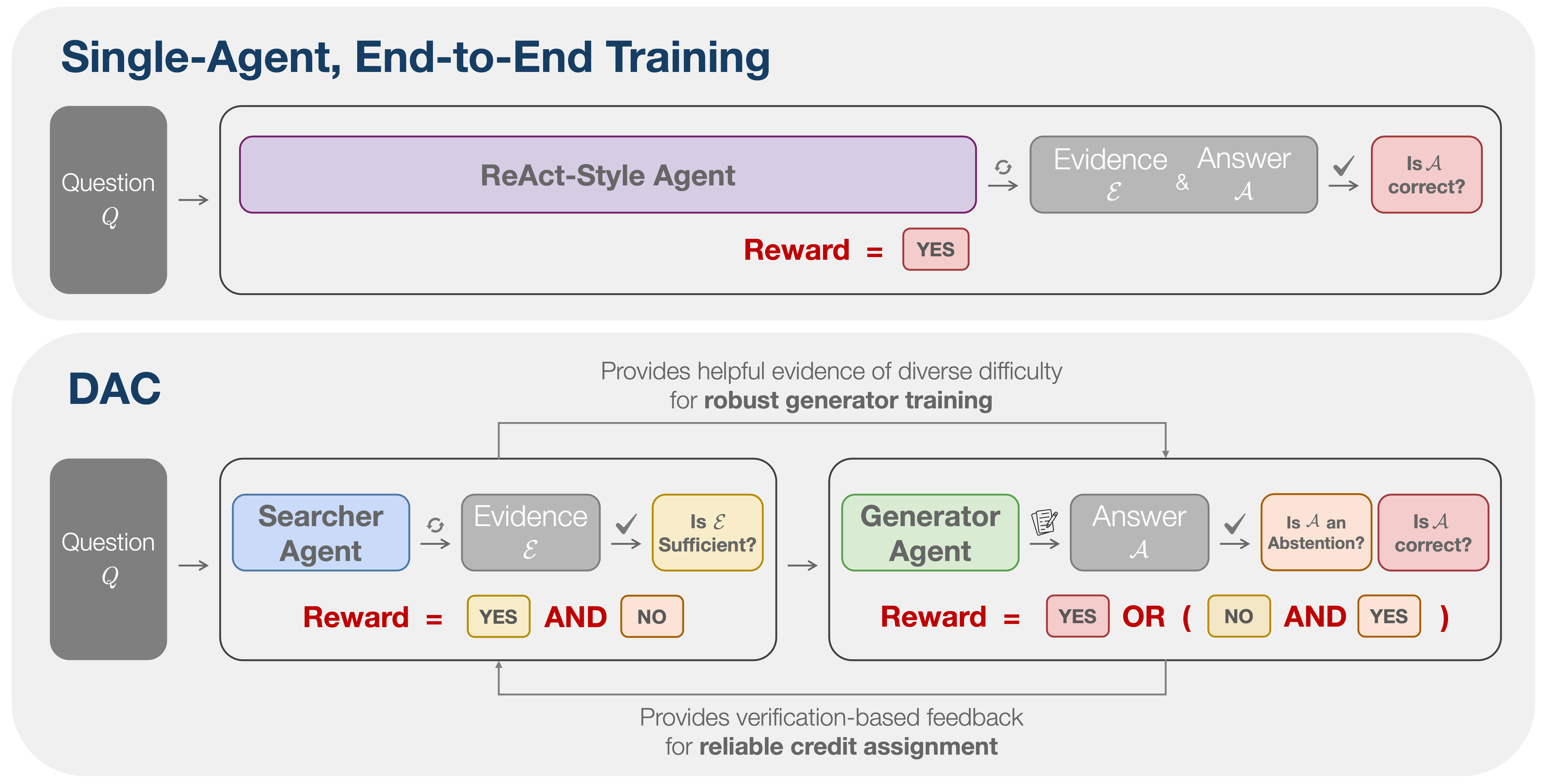}
    \caption{\textbf{Overview of DAC.} Prior approaches to agentic training typically train a single agent end-to-end, using final answer correctness as the reward for the entire trajectory. In contrast, we decompose the system into a searcher and a generator, and train them with role-specific rewards. The searcher is rewarded for retrieving sufficient evidence that the generator doesn't abstain, while the generator is rewarded for answering correctly or abstaining when the evidence is insufficient. This aligns retrieval behavior with downstream answerability, while separating credits across agents.}
    \label{fig:overview}
    \vspace{-10pt}
\end{figure}

Agentic search refers to tasks where a large language model (LLM) agent iteratively gathers external evidence and synthesizes it into a final response.
A natural approach is iterative, reasoning-guided retrieval \citep{react, ircot}, where the agent interleaves search and reasoning as evidence accumulates. A recent line of work further uses reinforcement learning (RL) to train LLMs to invoke search tools, optimizing via outcome-based rewards \citep{searchr1, r1searcher, research}. Such systems can be extended to deep research systems with richer data curation, longer context management, and broader tool integration \citep{sfrdeepresearch, tongyidr}.

However, most existing approaches rely on a single policy to handle both evidence acquisition and answer generation within the same action space, following the ReAct framework \citep{react}.
This coupling creates a fundamental credit assignment problem: a searcher that retrieves sufficient evidence may still be penalized when the generator fails, while weak retrieval may go unpenalized if the generator's parametric knowledge compensates.
While role-decomposed and multi-agent frameworks have demonstrated inference-time gains on agentic tasks, reliably training such systems remains challenging due to error propagation and ambiguous credit assignment \citep{sciencescalingagentsystems}.

We propose \textbf{DAC} (Divide and Cooperate), a multi-agent training framework that models evidence acquisition and answer generation as two cooperative agents, each trained with role-specific credit signals (\Cref{fig:overview}). The central challenge is how to reward the searcher without a reliable oracle for evidence sufficiency. 
Our key insight is to augment the generator's action space with an abstention action, allowing it to abstain when it judges retrieved evidence to be insufficient. This decision is then incorporated into the searcher's reward. This role-specific rewarding provides more reliable credit assignment as the generator's verification boundary co-evolves with the searcher throughout training.

This design, however, introduces a predictable failure mode: the generator may become overly conservative, abstaining even on sufficient evidence. This resembles over-refusal in refusal-aware instruction tuning \citep{craft,grait}, where noisy or imbalanced abstention supervision can bias models toward refusing answerable inputs. We interpret this through a game-theoretic lens, suggesting that without enough hard but answerable examples, the generator's verification boundary can become overly conservative. 
This motivates \emph{hard-positive evidence augmentation}, where we increase the number of hard but answerable samples by adding irrelevant noise documents to sufficient evidence sets, approximating adversarial pressure on the generator's verification boundary. We additionally use \emph{turn-level difference rewards} to provide denser credit signals for searcher training.

We evaluate DAC on general and multi-hop question answering (QA) benchmarks against non-search inference, search-augmented inference, and RL-trained inference baselines using \texttt{Qwen2.5-7B-Instruct} and \texttt{Qwen3-8B} as backbone models. Our method consistently outperforms all baselines across both general and multi-hop QA benchmarks on both backbones, while using only lightweight LoRA adapters rather than full fine-tuning.

Our contributions are as follows:
\begin{itemize}[leftmargin=2em]
    \item We propose \textbf{DAC}, a cooperative multi-agent training framework for agentic search with a novel role-specific reward design: the generator's abstention serves as a verification signal for the searcher, and the searcher's retrieval shapes the generator's answering behavior. DAC provides a step toward stable and efficient training of multi-agent LLM systems.
    \item We analyze the instability of cooperative rewarding through a Stackelberg game lens, identifying over-abstention and hard-positive sample imbalance as key failure modes, and propose hard-positive evidence augmentation as a principled remedy.
    \item Using only two lightweight LoRA adapters ($\sim$2.12\% of full fine-tuning parameters), DAC consistently outperforms full fine-tuned single-agent baselines on open-domain and multi-hop QA benchmarks.
\end{itemize}

\section{Related work}
\label{sec:related_work}
\paragraph{Retrieval-augmented reasoning}
RAG improves knowledge-intensive language generation by grounding model outputs in external corpora \citep{rag}, but complex questions often require retrieval decisions that adapt to intermediate reasoning. ReAct \citep{react} interleaves reasoning traces with external actions, enabling dynamic information gathering during task solving. IRCoT \citep{ircot} guides retrieval with chain-of-thought steps, showing that reasoning and retrieval can mutually improve each other in multi-hop QA. Search-o1 \citep{searcho1} augments large reasoning models with an agentic search workflow and a separate module that analyzes the retrieved content. These methods demonstrate the importance of adaptive retrieval, but rely on a single model or inference-time procedure for both information seeking and answer generation.

\paragraph{Learning to search with RL}
Recent work has increasingly framed retrieval-augmented reasoning as an RL problem, where LLM agents learn to issue search or retrieval actions while solving knowledge-intensive tasks. Search-R1 \citep{searchr1} formulates search calls as part of the reasoning trajectory and optimizes multi-turn retrieval behavior with outcome-based rewards. Related systems such as R1-Searcher \citep{r1searcher} and ReSearch \citep{research} similarly encourage models to decide when and what to search for. This paradigm has also been extended beyond short-form question answering to complex, open-ended deep research settings, where models are trained to perform long-horizon search, tool use, and synthesis \citep{sfrdeepresearch,tongyidr}. Despite this progress, these systems largely couple search and generation within a single policy. M-ASK \citep{mask} improves performance by enriching the workflow with additional specialized roles such as planning, summarization, and knowledge update, demonstrating that multi-role decomposition can benefit agentic search. s3 \citep{s3} explicitly decouples the searcher from the generator and trains a dedicated search policy, yielding better performance. However, the generator is still kept frozen, preventing the answer agent from adapting to the searcher’s acquired evidence. DAC instead cooperatively trains both agents with role-specific rewards, allowing the searcher and generator to specialize while being jointly optimized.

\paragraph{Multi-agent LLM training}
Recent work has explored multi-agent frameworks for LLM training from several angles.
Cooperative approaches such as MALT \citep{malt} and MaPoRL \citep{maporl} train multiple LLM agents to collaborate through role-specialized reasoning, refinement, or discussion, using joint outcomes to improve task performance.
A separate line of work explores co-evolving verifiers for verifier-guided RL, where the verifier is trained jointly with the main agent rather than kept fixed. \citet{seed15thinking} introduce the Seed-Thinking-Verifier, a reward model trained alongside the policy via RL, enabling the verifier to learn smoother and more reliable reward boundaries as the policy evolves. 
Prover-Verifier Games \citep{kirchner2024pvg} and RLAC \citep{rlac} take this further by introducing explicit adversarial pressure. The former uses a sneaky agent to push the verifier toward a more robust boundary, while the latter trains a critic that dynamically identifies failure modes in generation.
DAC draws on all three of these ideas: the searcher and generator are cooperatively trained with role-specific rewards, the generator's abstention boundary co-evolves with the searcher as a learned verifier, and hard-positive evidence augmentation approximates adversarial pressure to prevent the generator from collapsing to over-abstention.

\section{DAC}
\label{sec:methodology}
We introduce \textbf{DAC} (Divide and Cooperate), a role-decomposed multi-agent training framework that separates iterative evidence acquisition from final answer generation in agentic search. By separating these roles, DAC enables joint RL training with role-specific rewards that better address credit assignment between search and generation.

\subsection{Sequential decomposition of agentic search}

We define \textbf{agentic search} as a broad class of tasks in which an LLM agent first acquires task-relevant evidence and then uses that evidence to produce a final response. This formulation extends beyond knowledge-based question answering to include open-ended, long-form generation, planning, or any other tasks that require information gathering before synthesis. Though most chat-involved agentic systems follow similar structures, this differs from interaction-centric tasks where the primary challenge is acting within an external environment and the final response mainly summarizes the completed actions. We instead decompose agentic search into two sequentially interacting roles, \textbf{evidence acquisition} and \textbf{response generation}, and optimize a dedicated agent for each role.

Under this decomposition, the agentic task proceeds as follows:
\begin{tcolorbox}[breakable, colframe=black, colback=white, sharp corners, boxrule=0.5pt, before skip=10pt, after skip=10pt, top=10pt, bottom=10pt]
\begin{enumerate}[leftmargin=*]
    \item A task specification $Q$, also called a \textbf{question}, is provided to the system. The system consists of two agents: a \textbf{searcher agent} $\mathcal{S}$ and a \textbf{generator agent} $\mathcal{G}$.

    \item $\mathcal{S}$ performs iterative evidence acquisition. At each iteration $t$, it observes $Q$ and the evidence collected so far, $\mathcal{E}_{<t} = \bigcup_{i<t} \mathcal{E}_i$. Based on this information, $\mathcal{S}$ either generates a tool query $q_t$ or terminates the loop if it determines that no further information is needed.

    \item If a query $q_t$ is generated, it is submitted to an external tool. The tool returns a set of relevant results, which $\mathcal{S}$ processes into step-level evidence $\mathcal{E}_t$.

    \item The search loop continues until $\mathcal{S}$ decides to stop. Let $T$ denote the final number of search iterations. The complete evidence set is then aggregated as $\mathcal{E} = \bigcup_{t=1}^{T} \mathcal{E}_t$.

    \item The generator agent $\mathcal{G}$ receives the original question $Q$ together with the aggregated evidence $\mathcal{E}$, and produces the final \textbf{answer} $\mathcal{A} = \mathcal{G}(Q, \mathcal{E})$.
\end{enumerate}
\end{tcolorbox}

We deliberately define \textbf{evidence} broadly as the accumulated task-relevant context rather than restricting it to raw tool outputs. Thus, $\mathcal{E}_t$ may include extracted facts, summaries, intermediate reasoning, or other processed information passed to the generator, depending on the task.

\subsection{Reward design for joint optimization}
\label{sec:reward_design}

With this two-stage formulation, we aim to jointly optimize $\mathcal{S}$ and $\mathcal{G}$ using RL. The final objective of agentic search is the quality of the final response. For question answering, this can be instantiated as the exact match (EM) score of the generated answer, as in \citet{searchr1}.

A straightforward design is to assign this \emph{answer-level reward} to both agents. However, this creates a credit assignment problem, as each agent may be penalized for failures caused by the other. An alternative is to define a separate \emph{evidence-acquisition reward} for the searcher. However, evidence sufficiency is often difficult to verify exactly. Furthermore, such notion of sufficiency itself should depend on the generator, as evidence that is sufficient for one generator may not be sufficient for another. Since the generator evolves throughout training, maintaining a fixed oracle verifier for search quality is therefore difficult.

To address this issue, we employ \textbf{cross-verification rewards}, by giving the generator an additional verification role. The generator is allowed to {abstain} when it judges that the provided evidence is insufficient. This abstention decision is then incorporated into the searcher's reward. At the same time, the generator is rewarded not only for answering correctly, but also for correctly abstaining when the evidence is indeed insufficient according to an external verifier. This is also related to the Seed-Thinking-Verifier \citep{seed15thinking}, where a learned verifier is trained to align with a weak external verifier while inducing smoother and more useful decision boundaries.

More formally, let
\begin{align*}
    s\qty(Q, \mathcal{E}) &= \mathbb{I}\bigl[\text{evidence $\mathcal{E}$ is sufficient to answer question $Q$}\bigr], \\
    v\qty(\mathcal{A}) &= \mathbb{I}\bigl[\text{generation $\mathcal{A}$ is not an abstention}\bigr], \\
    a\qty(Q, \mathcal{A}) &= \mathbb{I}\bigl[\text{generation $\mathcal{A}$ is a qualified answer to question $Q$}\bigr],
\end{align*}
where ground truth evidence sufficiency and answer correctness are judged by an external verifier. Then, we may define the searcher and generator rewards as:
\begin{align}
    r_{\mathcal{S}}\qty(Q, \mathcal{E}, \mathcal{A}) &= \mathbb{I}\bigl[s\qty(Q, \mathcal{E}) = 1 \wedge v\qty(\mathcal{A}) = 1\bigr], \\
    r_{\mathcal{G}}\qty(Q, \mathcal{E}, \mathcal{A}) &= \mathbb{I}\bigl[a\qty(Q, \mathcal{A}) = 1 \vee \bigl(s\qty(Q, \mathcal{E}) = 0 \wedge v\qty(\mathcal{A}) = 0\bigr)\bigr].
\end{align}
Thus, the searcher is rewarded for retrieving evidence that is both externally sufficient and accepted by the generator, avoiding penalties caused solely by generator-side answer failures. The generator is rewarded for either answering correctly or correctly abstaining on insufficient evidence.

\subsection{Mitigating over-abstention with hard-positive evidence augmentation}
\label{sec:augmentation}

A predictable failure mode of our framework is \emph{over-abstention}. Since the generator is rewarded for abstaining when evidence is insufficient, limited exposure to sufficient evidence may cause it to learn an overly conservative verification boundary and abstain even on evidence that is sufficient but imperfect. This remains problematic even if abstention is disabled at test time. A test-time prompt can force the generator to answer, but if it relied too heavily on abstention during training, it may not learn to answer well from noisy, partial, or imperfect evidence.

Our mitigation is motivated by a game-theoretic intuition. To avoid an overly conservative boundary, the generator must be exposed to sufficient evidence that is not only easy, but also hard or distracting. Such examples pressure the generator to expand its answerable region rather than accepting only clean sufficient evidence. We therefore augment generator training with \textbf{hard-positive evidence} samples. For each sufficient question--evidence pair $(Q,\mathcal{E})$ with $s(Q,\mathcal{E})=1$, we sample irrelevant or weakly related noise evidence $\mathcal{E}_N$ and construct $\mathcal{E}' = \mathcal{E} \cup \mathcal{E}_N$.Since $\mathcal{E}$ is already sufficient, the added noise preserves answerability while making the input harder. Training on $(Q,\mathcal{E}')$ encourages the generator to answer from sufficient but non-ideal evidence rather than abstaining too often.

This augmentation can be viewed as a lightweight analogue of prover-verifier games \citep{kirchner2024pvg,anil2021pvg}. Whereas sneaky provers pressure a verifier to reject incorrect but persuasive solutions, our augmentation targets the opposite failure mode, rejecting correct but noisy answerable cases. Instead of training a separate adversarial agent to produce such hard-positives, we approximate this effect with noise-based evidence augmentation. In more complex settings where noise sampling is difficult, one could instead train an adversarial searcher to generate hard sufficient evidence. A more detailed comparison with prover-verifier games is given in \cref{sec:pvg} and Appendix~\ref{sec:proof}.

\subsection{Turn-level difference rewards for searcher training}

We also introduce a minor enhancement to searcher training. The final search reward $r_{\mathcal{S}}$ provides only trajectory-level feedback, which can make it difficult to determine which search actions were actually useful. To obtain a denser signal, we assign rewards to individual turns using the \emph{change in reward after each turn}, inspired by the Gain Beyond RAG (GBR) reward of \citet{s3}.

For each search turn $t$, let $\mathcal{E}_{\leq t} = \bigcup_{i=1}^{t} \mathcal{E}_i$ be the evidence collected up to that turn. After each turn, we run the generator as if the partial evidence $\mathcal{E}_{\leq t}$ were the final retrieved evidence, and compute the corresponding search reward. Then, we define the turn-level reward as the difference
\begin{equation*}
    s_{\mathcal{S}}^{(t)} = r_{\mathcal{S}}\Bigl(Q, \mathcal{E}_{\leq t}, \mathcal{G}(Q,\mathcal{E}_{\leq t})\Bigr), \qquad r_{\mathcal{S}}^{(t)} = s_{\mathcal{S}}^{(t)} - s_{\mathcal{S}}^{(t-1)}.
\end{equation*}
Thus, a search turn is rewarded when the newly acquired evidence improves the downstream reward and penalized when it makes the evidence state less useful.
This formulation also preserves a useful telescoping property. When the discount factor is $\gamma = 1$, the return from turn $t$ is 
\begin{equation*}
    \sum_{i=t}^{T} r_{\mathcal{S}}^{(i)} = \sum_{i=t}^{T}\qty(s_{\mathcal{S}}^{(i)} - s_{\mathcal{S}}^{(i-1)}) = s_{\mathcal{S}}^{(T)} - s_{\mathcal{S}}^{(t-1)}.
\end{equation*}
Therefore, each turn receives credit according to how much the final evidence quality improves relative to the state before that turn, yielding a denser signal than trajectory-level reward.

This may increase training-time compute, but this overhead is also a direct benefit of the decomposed design. Because the generator is separated from the searcher, it can be used as an evaluator of intermediate states, providing turn-level feedback that is unavailable in a single-policy formulation.

\section{Experiments}
\label{sec:experiments}
We experiment on knowledge-based question answering tasks, covering general QA, which typically requires retrieving a single supporting fact, and multi-hop QA, which requires combining multiple pieces of evidence across reasoning steps.
In our experiments, we use a dense-retrieval system as the external tool. For simplicity, we maintain evidence as the unprocessed trajectory, including its reasoning, queries, and retrieved documents. For the generator, we pass only the retrieved documents as evidence, which keeps the input format simple and makes noise-based evidence augmentation straightforward.

\subsection{Implementation details}

Following \citet{searchr1}, the searcher's interaction is serialized as a single trajectory of repeated
\[
    \texttt{<think>} \cdots \texttt{</think>}
    \texttt{<search>} \cdots \texttt{</search>}
    \texttt{<information>} \cdots \texttt{</information>},
\]
segments. The loop terminates when the searcher emits \texttt{<stop>} after \texttt{</think>}. During training, \texttt{information} blocks are treated as environment outputs and masked out, so only the searcher's generated reasoning, search, and stop tokens are optimized.

For the training reward signals, we use exact match (EM)-based verification for both answer correctness and retrieval sufficiency. For retrieval, a set of documents is considered sufficient if at least one document contains the exact answer. Turn-level rewards are attached to the last token of each segment. The searcher is trained with PPO \citep{ppo}, since it receives intermediate turn-level rewards, while the generator is trained with GRPO \citep{deepseekmath}.

We conduct experiments with two backbone models: \texttt{Qwen2.5-7B-Instruct} \citep{qwen25} and \texttt{Qwen3-8B} \citep{qwen3}. For each backbone, the searcher and the generator agents are both initialized from the same pretrained model with separate LoRA adapters. The retrieval system uses the December 2018 Wikipedia dump, with \texttt{E5-base-v2} \citep{e5} as the embedding model. Each query retrieves the top-$3$ passages, and the searcher is allowed at most four search turns. Also, we merge the training splits of NQ \citep{nq} and HotpotQA \citep{hotpotqa} to construct the training dataset. Further implementation details are given in Appendix~\ref{sec:training_details}.

\subsection{Evaluation benchmarks and metrics}

We evaluate our system on seven QA benchmarks. For general QA, we use NQ, TriviaQA \citep{triviaqa}, and PopQA \citep{popqa}. For multi-hop QA, we use HotpotQA, 2WikiMultihopQA \citep{2wiki}, MuSiQue \citep{musique}, and Bamboogle \citep{selfask}.

Our main metric target is final-answer correctness. At test time, abstention is disabled, so the generator is required to answer every example. We report two metrics, EM and an LLM-based correctness score. For the LLM-based metric, we prompt \texttt{gpt-oss-120b} \citep{gptoss} with reasoning effort \texttt{medium} to determine whether the generated answer is semantically correct with respect to the gold answer. The prompt used for the LLM-based judge is provided in Appendix~\ref{sec:prompts}.

\subsection{Baselines}

We compare against three groups of baselines: \textbf{(1) non-search inference}, including direct answering and chain-of-thought (CoT) reasoning \citep{cot} \textbf{(2) search-augmented inference}, including naive RAG \citep{rag}, IRCoT \citep{ircot}, and Search-o1 \citep{searcho1} \textbf{(3) RL-trained inference}, including Search-R1 \citep{searchr1}, for which we evaluate four variants by combining PPO or GRPO with either outcome-only reward or outcome and format rewards. All baseline scores are reproduced, except for three Search-R1 variants on \texttt{Qwen2.5-7B-Instruct}: PPO with outcome-only reward, GRPO with outcome-only reward, and GRPO with outcome and format rewards. For these, we evaluate the official checkpoints directly \footnote{%
  \begin{tabular}[t]{@{}l@{}}
    \url{https://huggingface.co/PeterJinGo/SearchR1-nq_hotpotqa_train-qwen2.5-7b-it-em-ppo-v0.2} \\
    \url{https://huggingface.co/PeterJinGo/SearchR1-nq_hotpotqa_train-qwen2.5-7b-it-em-grpo-v0.2} \\
    \url{https://huggingface.co/PeterJinGo/SearchR1-nq_hotpotqa_train-qwen2.5-7b-it-em-grpo-v0.3}
  \end{tabular}%
}.

\section{Results}
\label{sec:results}
\begin{table}[tb]
\centering
\caption{\textbf{Main results on QA benchmarks.} We report exact match (EM) and LLM-judge accuracy (LLM). Bold numbers indicate the best result within each model block. Datasets marked with $^\ast$ are in-domain training datasets. For Search-R1, O denotes outcome-only rewards, and OF denotes outcome-plus-format rewards. The column FT notes which finetuning method was used.
}
\label{tab:main_results}
\vskip 1em
\setlength{\tabcolsep}{2.5pt}
\resizebox{\textwidth}{!}{%
\begin{tabular}{llcccccccccccccccc}
\toprule
\multirow{4}{*}{Method} & \multirow{4}{*}{FT} & \multicolumn{6}{c}{General QA} & \multicolumn{8}{c}{Multi-Hop QA} & \multicolumn{2}{c}{\multirow{2.5}{*}{Average}} \\
\cmidrule(lr){3-8} \cmidrule(lr){9-16}
& & \multicolumn{2}{c}{NQ{$^*$}} & \multicolumn{2}{c}{TriviaQA} & \multicolumn{2}{c}{PopQA} & \multicolumn{2}{c}{HotpotQA\!{$^*$}} & \multicolumn{2}{c}{2Wiki} & \multicolumn{2}{c}{MuSiQue} & \multicolumn{2}{c}{Bamb.} & \\
\cmidrule(lr){3-4} \cmidrule(lr){5-6} \cmidrule(lr){7-8} \cmidrule(lr){9-10} \cmidrule(lr){11-12} \cmidrule(lr){13-14} \cmidrule(lr){15-16} \cmidrule(lr){17-18}
& & EM & LLM & EM & LLM & EM & LLM & EM & LLM & EM & LLM & EM & LLM & EM & LLM & EM & LLM \\
\midrule
\rowcolor[HTML]{FFF2CC} \multicolumn{18}{c}{\texttt{Qwen2.5-7B-Instruct}} \\
\midrule
Direct & - & 18.2 & 30.5 & 44.8 & 52.3 & 16.5 & 19.2 & 19.1 & 28.0 & 24.6 & 25.8 & 4.3 & 7.9 & 9.6 & 11.2 & 19.6 & 25.0 \\
CoT & - & 19.3 & 33.7 & 47.0 & 56.2 & 16.0 & 19.0 & 20.8 & 30.5 & 21.2 & 22.8 & 5.3 & 10.2 & 25.6 & 30.4 & 22.2 & 29.0 \\
Naive RAG & - & 36.9 & 55.4 & 58.9 & 71.4 & 38.9 & 46.8 & 29.3 & 42.7 & 17.9 & 21.2 & 6.4 & 9.9 & 24.0 & 28.0 & 30.3 & 39.4 \\
IRCoT & - & 16.6 & 46.8 & 29.8 & 59.4 & 20.7 & 45.6 & 18.7 & 40.9 & 19.5 & 31.1 & 4.8 & 11.1 & 15.2 & 26.4 & 17.9 & 36.9 \\
Search-o1 & - & 31.3 & 51.8 & 52.3 & 65.2 & 33.6 & 41.1 & 29.0 & 44.8 & 26.0 & 32.9 & 11.6 & 18.5 & 33.6 & 41.6 & 31.1 & 42.3 \\
Search-R1 (PPO/O) & Full & 39.2 & 55.9 & 61.7 & 73.1 & 39.5 & 45.9 & 38.0 & 54.0 & 34.4 & 40.0 & 15.6 & 21.6 & 40.8 & 53.6 & 38.5 & 49.2 \\
Search-R1 (PPO/OF) & Full & 37.6 & 54.6 & 60.8 & 71.8 & 37.9 & 44.6 & 35.5 & 51.2 & 32.9 & 38.8 & 12.7 & 19.6 & 37.6 & 48.0 & 36.4 & 46.9 \\
Search-R1 (GRPO/O) & Full & \textbf{43.9} & 57.2 & \textbf{62.0} & 72.0 & \textbf{42.6} & 47.5 & 38.5 & 54.3 & 34.8 & 39.5 & 15.6 & 21.8 & 40.0 & 50.4 & 39.6 & 49.0 \\
Search-R1 (GRPO/OF) & Full & 39.6 & 57.9 & 60.9 & 72.9 & 39.9 & 46.5 & 37.1 & 53.0 & 34.7 & 40.5 & 15.8 & 23.1 & 35.2 & 45.6 & 37.6 & 48.5 \\
\midrule
\textbf{Ours} & LoRA & 41.9 & \textbf{59.4} & 61.2 & \textbf{73.5} & 41.1 & \textbf{47.6} & \textbf{40.3} & \textbf{57.0} & \textbf{35.4} & \textbf{41.4} & \textbf{17.2} & \textbf{23.7} & \textbf{41.6} & \textbf{52.0} & \textbf{39.8} & \textbf{50.7} \\
\midrule
\rowcolor[HTML]{FFF2CC} \multicolumn{18}{c}{\texttt{Qwen3-8B}} \\
\midrule
Direct & - & 16.3 & 32.5 & 42.4 & 52.5 & 15.4 & 20.5 & 18.2 & 28.1 & 25.2 & 27.1 & 3.4 & 7.8 & 10.4 & 12.8 & 18.8 & 25.9 \\
CoT & - & 19.9 & 36.0 & 45.3 & 53.8 & 15.9 & 19.9 & 21.3 & 31.3 & 25.5 & 27.6 & 5.8 & 10.9 & 32.8 & 40.8 & 23.8 & 31.5 \\
Naive RAG & - & 35.9 & 54.9 & 58.5 & 70.3 & 38.7 & 45.5 & 29.0 & 42.9 & 19.3 & 22.7 & 7.0 & 11.0 & 26.4 & 34.4 & 30.7 & 40.2 \\
IRCoT & - & 35.6 & 57.9 & 57.9 & 74.5 & 36.3 & 50.0 & 36.9 & 53.5 & 26.3 & 34.4 & 12.0 & 17.7 & 36.8 & 47.2 & 34.4 & 47.9 \\
Search-o1 & - & 32.8 & 56.3 & 49.9 & 72.1 & 35.7 & 45.3 & 29.9 & 51.3 & 31.7 & 40.9 & 11.3 & 22.1 & 32.0 & 48.0 & 31.9 & 48.0 \\
Search-R1 (PPO/O) & Full & 43.4 & 56.5 & 60.5 & 70.6 & 43.4 & 48.0 & 40.3 & 55.6 & 38.2 & 43.6 & 16.5 & 22.0 & 48.0 & 58.4 & 41.5 & 50.7 \\
Search-R1 (PPO/OF) & Full & 45.1 & 58.5 & 61.8 & 73.1 & 45.8 & 50.4 & 42.2 & 58.3 & 42.2 & 47.9 & 19.4 & 26.2 & 49.6 & 59.2 & 43.7 & 53.4 \\
Search-R1 (GRPO/O) & Full & 48.8 & 62.2 & 66.0 & 76.6 & 46.9 & 51.6 & 44.8 & 61.0 & 40.5 & 46.3 & 18.3 & 26.1 & 48.8 & 56.8 & 44.9 & 54.4 \\
Search-R1 (GRPO/OF) & Full & 48.1 & 61.8 & 66.1 & 77.1 & 45.3 & 50.4 & 45.0 & 62.3 & 43.9 & 50.1 & \textbf{21.7} & \textbf{29.6} & 48.8 & 59.2 & 45.6 & 55.8 \\
\midrule
\textbf{Ours} & LoRA & \textbf{49.2} & \textbf{62.8} & \textbf{66.4} & \textbf{77.8} & \textbf{47.9} & \textbf{52.9} & \textbf{45.7} & \textbf{62.8} & \textbf{46.0} & \textbf{52.8} & 21.1 & 28.5 & \textbf{49.6} & \textbf{61.6} & \textbf{46.6} & \textbf{57.0} \\
\bottomrule
\end{tabular}}
\vspace{-10pt}
\end{table}

The main results are shown in \Cref{tab:main_results}. Our method consistently improves over Search-R1 across most benchmarks, including both general and multi-hop QA tasks, as well as both in-domain and out-of-domain evaluation settings. Notably, these gains are achieved using only two lightweight LoRA adapters of rank $32$, one for the searcher and one for the generator, whereas the Search-R1 baselines are trained with full fine-tuning. We further report LoRA-based Search-R1 results in Appendix~\ref{sec:lora_results}. Compared with these parameter-efficient baselines, the gains from our method are substantially larger. This comparison suggests that explicitly decomposing the system into searcher and generator agents, and jointly optimizing them with role-specific rewards, is more effective than training a single agent in an end-to-end manner.

\section{Analysis}
\label{sec:analysis}
\subsection{Abstention as a learned verification signal}

\begin{figure}[tb]
    \centering
    \includegraphics[width=0.99\textwidth]{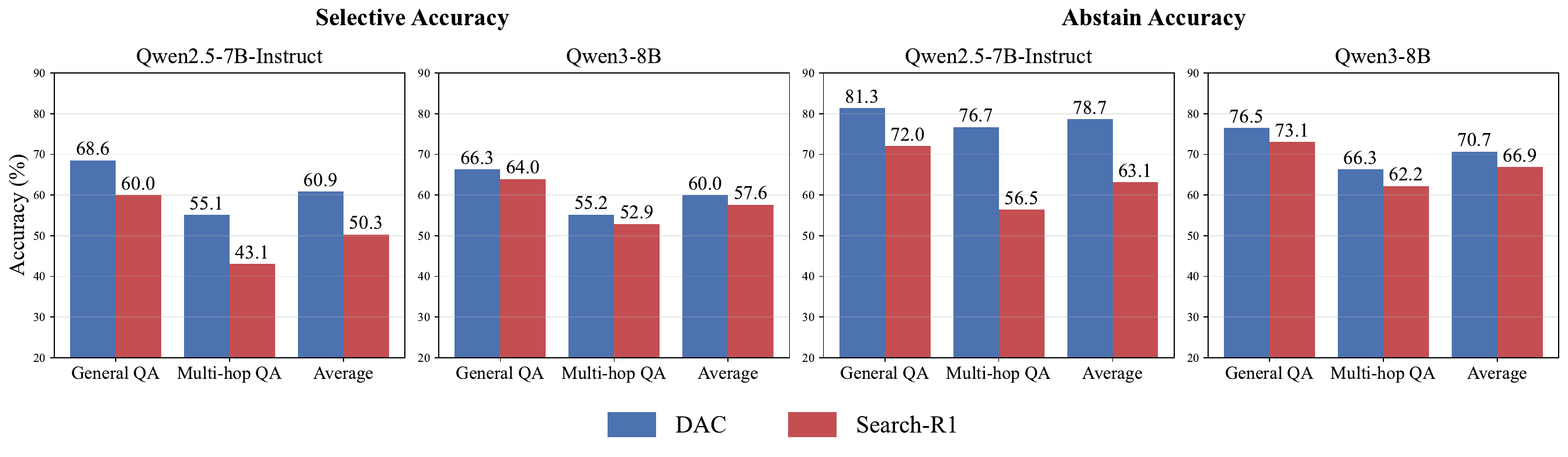}
    \caption{\textbf{Selective and abstain accuracy of DAC vs. Search-R1.} 
    Selective accuracy measures answer accuracy among non-abstained examples. Abstain accuracy measures whether the model correctly decides to answer or abstain based on evidence sufficiency. Both answer accuracy and evidence sufficiency are evaluated using LLM-based judging with \texttt{gpt-oss-120b}. Results are grouped by benchmark type (General QA, Multi-hop QA) and averaged across all benchmarks.}
    \label{fig:abstention}
    \vspace{-10pt}
\end{figure}

A central component of DAC's reward design is the generator's ability to judge evidence sufficiency through abstention. Although abstention is disabled at test time, we verify whether this capability is genuinely learned during training, by evaluating against Search-R1 across two metrics, \emph{abstain accuracy} (whether the model correctly decides to answer or abstain given the evidence's sufficiency) and \emph{selective accuracy} (answer correctness among non-abstained examples). Since Search-R1 lacks explicit abstention, we regard \emph{continuing to search until the last turn} as an abstention.

As shown in \Cref{fig:abstention}, DAC consistently outperforms Search-R1 on both metrics across all benchmarks and both backbone models, confirming that the cross-verification reward successfully teaches the generator to distinguish sufficient from insufficient evidence. We further verify in Appendix~\ref{sec:abstain_forced} that when the generator is forced to answer on inputs it has abstained on, accuracy drops substantially, indicating that abstention decisions are meaningful---the generator recognizes when evidence is insufficient rather than hastily providing an inaccurate answer.

Beyond its role in training, this capability could also be exposed as a standalone functionality in deployed systems. For instance, when the generator abstains, the system could automatically trigger additional search rounds to collect more evidence before producing a final answer, enabling a more reliable and self-correcting retrieval-augmented pipeline.

\subsection{Role decomposition and cross-agent reward are complementary}
We evaluate two simpler searcher reward designs as discussed in \cref{sec:reward_design}: using only answer correctness or only search sufficiency as the reward, without the generator's verification decision.We call each rewarding scheme \textbf{Decomp-AR} and \textbf{Decomp-SR}.

\Cref{tab:fixed_retrieval} in Appendix~\ref{sec:decomp_analysis} show test results under fixed retrieval. Decomp-AR outperforms Search-R1, suggesting that a dedicated generator learns stronger evidence-grounded generation. Yet DAC further outperforms Decomp-AR, showing that cross-verification provides a stronger training signal. Similarly, \Cref{tab:ablation} reports end-to-end evaluation results, and DAC outperforms both variants. Furthermore, as shown in \Cref{fig:plots_a}, Decomp-SR improves rapidly early in training but later degrades, indicating that fixed oracle supervision becomes misaligned with the generator's evolving answering ability.

Overall, role decomposition and cross-agent reward design are complementary: decomposition improves role-specific capabilities, while generator-based verification better aligns searcher training with the generator's needs.

\begin{table}[tb]
\centering
\caption{\textbf{Test results for training variants on \texttt{Qwen2.5-7B-Instruct}.} Scores are reported as differences relative to DAC. \textcolor{mutedred}{Red} indicates a difference less than -1, and \textbf{bold} indicates best performance.}
\label{tab:ablation}
\vskip 1em
\setlength{\tabcolsep}{3pt}
\resizebox{\textwidth}{!}{%
\begin{tabular}{lcccccccccccccccc}
\toprule
\multirow{4}{*}{Method} & \multicolumn{6}{c}{General QA} & \multicolumn{8}{c}{Multi-Hop QA} & \multicolumn{2}{c}{\multirow{2.5}{*}{Average}} \\
\cmidrule(lr){2-7} \cmidrule(lr){8-15}
& \multicolumn{2}{c}{NQ{$^*$}} & \multicolumn{2}{c}{TriviaQA} & \multicolumn{2}{c}{PopQA} & \multicolumn{2}{c}{HotpotQA\!{$^*$}} & \multicolumn{2}{c}{2Wiki} & \multicolumn{2}{c}{MuSiQue} & \multicolumn{2}{c}{Bamb.} & \\
\cmidrule(lr){2-3} \cmidrule(lr){4-5} \cmidrule(lr){6-7} \cmidrule(lr){8-9} \cmidrule(lr){10-11} \cmidrule(lr){12-13} \cmidrule(lr){14-15} \cmidrule(lr){16-17}
& EM & LLM & EM & LLM & EM & LLM & EM & LLM & EM & LLM & EM & LLM & EM & LLM & EM & LLM \\
\midrule
\textbf{DAC} (Base) & 41.9 & \textbf{59.4} & \textbf{61.2} & \textbf{73.5} & 41.1 & \textbf{47.6} & \textbf{40.3} & \textbf{57.0} & 35.4 & 41.4 & \textbf{17.2} & 23.7 & \textbf{41.6} & \textbf{52.0} & \textbf{39.8} & \textbf{50.7} \\
\midrule
Decomp-AR & \perf{-1.8} & \perf{-1.6} & \perf{+0.2} & \perf{+0.3} & \perf{-0.3} & \perf{-0.2} & \perf{-0.1} & \perf{-0.1} & \perf{-0.6} & \perf{-0.7} & \perf{-0.9} & \perf{-0.7} & \perf{-0.8} & \perf{-2.4} & \perf{-0.6} & \perf{-0.8} \\
Decomp-SR & \perf{-1.4} & \perf{-2.5} & \perf{-0.5} & \perf{-1.0} & \perf{-1.0} & \perf{-0.9} & \perf{-2.3} & \perf{-3.0} & \perf{-2.6} & \perf{-2.9} & \perf{-2.3} & \perf{-1.3} & \perf{-5.6} & \perf{-8.0} & \perf{-2.2} & \perf{-2.8} \\
\midrule
w/o Aug. Samples & \perf{-2.3} & \perf{-1.7} & \perf{-1.1} & \perf{-0.8} & \perf{-1.7} & \perf{-1.5} & \perf{-0.1} & \perf{-0.4} & \textbf{\perf{+0.9}} & \textbf{\perf{+0.8}} & \perf{-0.1} & \textbf{\perf{+0.8}} & \perf{-0.2} & \perf{-3.2} & \perf{-0.6} & \perf{-0.9} \\
w/o Turn-Level Rewards & \textbf{\perf{+0.6}} & \perf{-3.7} & \perf{-0.7} & \perf{-3.9} & \textbf{\perf{+0.3}} & \perf{-1.3} & \perf{-0.8} & \perf{-3.6} & \perf{-4.0} & \perf{-5.9} & \perf{-1.4} & \perf{-1.4} & \perf{-3.2} & \perf{-4.0} & \perf{-1.3} & \perf{-3.4} \\
\bottomrule
\end{tabular}}
\vspace{-5pt}
\end{table}

\subsection{Ablation study}

\begin{figure}[t]
    \centering
    \begin{subfigure}[b]{0.495\textwidth}
        \centering
        \includegraphics[width=0.49\textwidth]{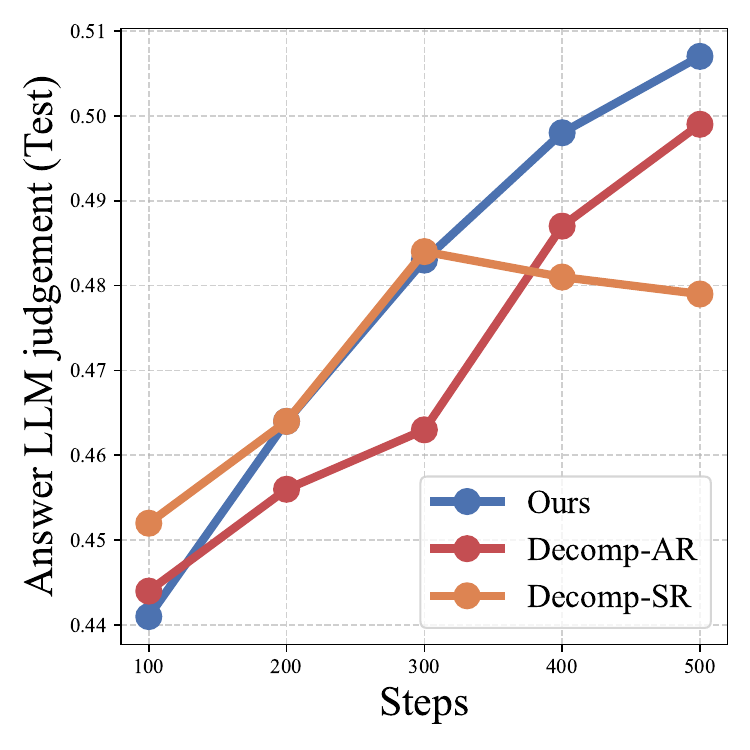}
        \hfill
        \includegraphics[width=0.49\textwidth]{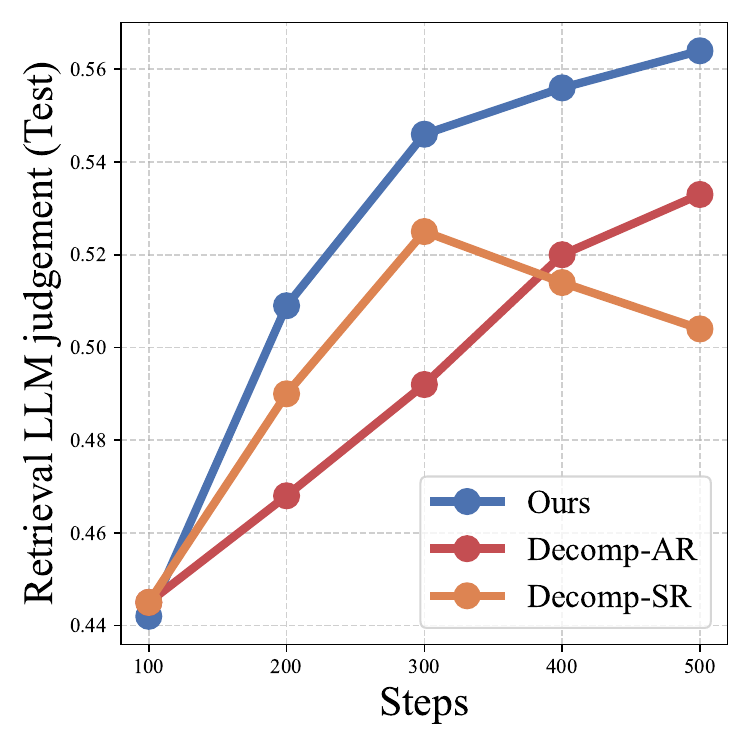}
        \caption{Answer and retrieval LLM scores.}
        \label{fig:plots_a}
    \end{subfigure}
    \hfill
    \begin{subfigure}[b]{0.2425\textwidth}
        \centering
        \includegraphics[width=\textwidth]{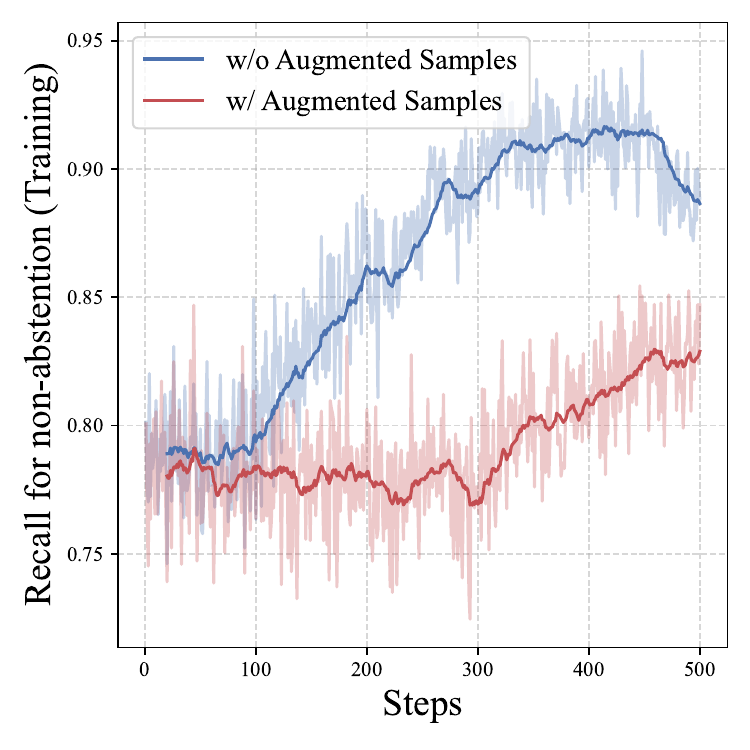}
        \caption{Non-abstention recall.}
        \label{fig:plots_b}
    \end{subfigure}
    \hfill
    \begin{subfigure}[b]{0.2425\textwidth}
        \centering
        \includegraphics[width=\textwidth]{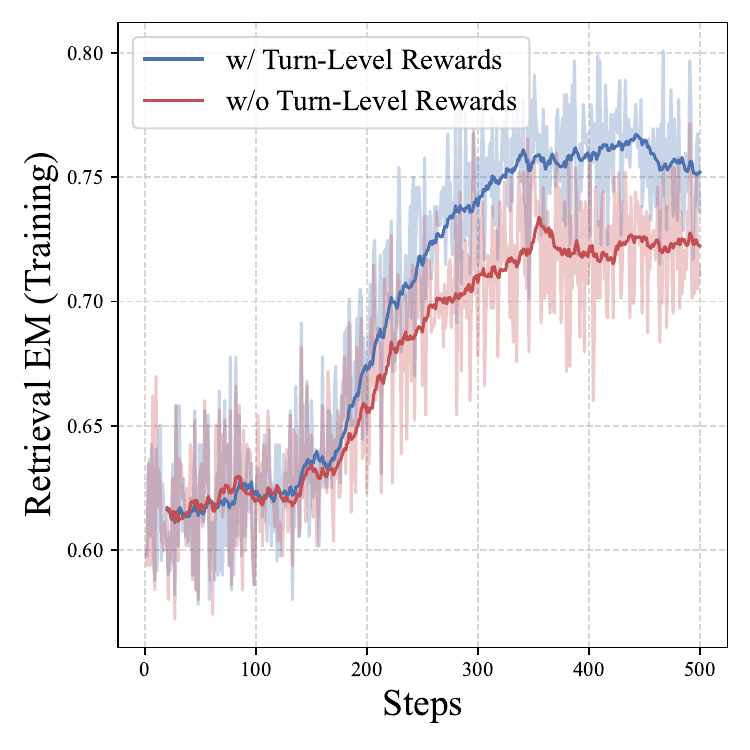}
        \caption{Retrieval EM.}
        \label{fig:plots_c}
    \end{subfigure}
    \caption{\textbf{Training dynamics for variants on \texttt{Qwen2.5-7B-Instruct}.} Plot (a) shows test, (b) and (c) show training metrics evaluated throughout training. (a) Answer correctness and retrieval sufficiency, evaluated by LLM-based judging. (b) Non-abstention recall of the generator: how often the generator correctly chooses to answer rather than abstain. (c) Retrieval EM of the searcher: whether the gold answer is contained in at least one of the retrieved documents.}
    \label{fig:plots}
    \vspace{-10pt}
\end{figure}

We conduct ablation studies on \texttt{Qwen2.5-7B-Instruct} to analyze the contribution of each component of DAC. Full test results are shown in \Cref{tab:ablation}, and the training dynamics of selected variants are shown in \Cref{fig:plots}.

\paragraph{Effect of hard-positive evidence augmentation}

We ablate hard-positive evidence augmentation by removing the noise documents added to sufficient evidence examples. As shown in \Cref{tab:ablation}, this lowers overall test performance. Although the effect varies across individual benchmarks, the average score decreases, indicating that the augmentation improves the generator's robustness. The training dynamics in \Cref{fig:plots_b} clarify the source of this improvement. Without augmented samples, the generator has substantially lower recall for non-abstention, i.e., it answers less often in cases where it should answer. This supports the failure mode discussed in \cref{sec:augmentation}. Hard-positive augmentation mitigates this by exposing the generator to sufficient but noisy evidence, encouraging it to answer rather than abstain too conservatively. We also analyze this in a more theoretical view in \cref{sec:pvg}.

\paragraph{Effect of turn-level difference rewards}

Finally, we evaluate the contribution of turn-level difference rewards for the searcher. Removing these rewards also decreases overall test performance, as shown in \Cref{tab:ablation}. This indicates that trajectory-level feedback alone is less effective for training the iterative search policy. The training curves in \Cref{fig:plots_c} show that without turn-level difference rewards, the searcher develops more slowly throughout training. In particular, retrieval EM improves more gradually, suggesting that turn-level rewards provide useful credit assignment by giving the searcher denser feedback on which turns improve the final evidence state.

\subsection{Stackelberg analysis}
\label{sec:pvg}

As mentioned in \cref{sec:augmentation}, our system can be viewed as a modified form of \emph{prover-verifier games}. More broadly, we formalize this intuition with an idealized \emph{Stackelberg game} \citep{stackelberg}, where the generator acts as the leader by choosing a verification threshold $\theta\in[0,1]$. Sufficient evidence is summarized by a difficulty level $h\in[0,1]$, and the generator answers if and only if $h\leq\theta$. We have two searchers each trained to provide easy and hard sufficient evidence, respectively.

Let $\alpha$ denote the effective generator training-batch weight of insufficient evidence, $\gamma$ the weight of hard sufficient evidence, and $c_{\mathcal{G}}(1)$ the probability that the generator answers correctly on the hardest sufficient evidence.

\begin{proposition}
\label{prop:stackelberg_main}
Under the idealized Stackelberg game, the generator's equilibrium threshold is
\begin{equation}
    \theta^\star = 0 \;\;\text{if}\;\; \alpha > \gamma c_{\mathcal{G}}(1), \qquad \theta^\star = 1 \;\;\text{if}\;\; \gamma c_{\mathcal{G}}(1) > \alpha,
\end{equation}
with both thresholds being equilibria when $\alpha=\gamma c_{\mathcal{G}}(1)$.
\end{proposition}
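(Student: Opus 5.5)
The plan is to fix the idealized model concretely, write the generator's expected reward as a function of the threshold $\theta$, and show it is linear (or at least monotone with a sign fixed by $\alpha-\gamma c_{\mathcal{G}}(1)$). The maximizer then sits at an endpoint.

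\textbf{Step 1: fix the followers' responses.} In the Stackelberg game the generator leads with $\theta$ and the two searchers best-respond. The easy searcher is rewarded by $r_{\mathcal{S}}$ only when the generator answers, so it supplies sufficient evidence of difficulty $h=0$; this is accepted for every $\theta\ge 0$ and contributes a constant, independent of $\theta$. The hard searcher is trained to provide hard sufficient evidence, so I will model it as placing its mass at $h=1$, the worst case. This is the source of $c_{\mathcal{G}}(1)$. I take the effective batch weights $\alpha$ and $\gamma$ as given by the training mixture.

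\textbf{Step 2: write the generator's payoff.} On insufficient evidence, $r_{\mathcal{G}}=1$ exactly when the generator abstains. In the threshold model I treat insufficient evidence as indistinguishable from the hardest case: its perceived difficulty is at the top of the scale, so the generator abstains on it iff $\theta<1$, or equivalently it is accepted with "probability" $\theta$ under a uniform-difficulty relaxation. On hard sufficient evidence, the generator answers iff $1\le\theta$ and then earns $c_{\mathcal{G}}(1)$; abstaining earns $0$, since $s=1$. This gives
\[
U(\theta)=\text{const}+\gamma c_{\mathcal{G}}(1)\,\phi(\theta)+\alpha\bigl(1-\phi(\theta)\bigr),
\]
where $\phi(\theta)\in[0,1]$ is the acceptance indicator or fraction, with $\phi(0)=0$ and $\phi(1)=1$. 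Then $U(1)-U(0)=\gamma c_{\mathcal{G}}(1)-\alpha$. If $\phi$ is linear, or more generally $U$ is affine in $\phi$, the maximizer is $\theta^\star=0$ when $\alpha>\gamma c_{\mathcal{G}}(1)$ and $\theta^\star=1$ when the inequality is reversed. In the tie case $U$ is constant in $\phi$, so both endpoints are optimal.

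\textbf{Step 3: equilibrium check.} I will verify that the searchers' responses in Step 1 remain best responses at $\theta^\star$, so the pair is a Stackelberg equilibrium. At $\theta^\star=0$ the hard searcher gains nothing from changing difficulty, since every $h>0$ is rejected. At $\theta^\star=1$ everything sufficient is accepted.

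The main obstacle is Step 2: justifying why the payoff is affine in a single acceptance variable, so that intermediate thresholds are never strictly optimal. I would handle this by stating the idealization explicitly. The followers' equilibrium support is $\{0,1\}$ in difficulty, and intermediate $\theta\in(0,1)$ accepts exactly the same sets as $\theta=0$. So only the endpoint comparison matters, and intermediate values are weakly dominated, tying with $\theta=0$. I will also state that the Stackelberg equilibrium is taken with tie-breaking toward the stated endpoints.
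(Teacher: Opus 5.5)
Your endpoint comparison $U(1)-U(0)=\gamma c_{\mathcal{G}}(1)-\alpha$ is correct. However, Step~2 mishandles the intermediate thresholds, and that is where the content of the statement lies.

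The two descriptions you give for insufficient evidence are not equivalent. ``Abstain iff $\theta<1$'' contributes $\alpha\,\mathbb{I}[\theta<1]$, whereas the uniform-difficulty relaxation contributes $\alpha(1-\theta)$. Under the relaxation, a single $\phi$ also cannot serve both terms: by your own Step~1 the hard sufficient evidence sits at $h=1$, so it is accepted iff $\theta=1$.

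The paper's idealized game uses the relaxation for insufficient evidence. The induced generator payoff is therefore
\[
V(\theta)=\alpha(1-\theta)+\beta+\gamma c_{\mathcal{G}}(1)\,\mathbb{I}[\theta=1],
\]
which is not affine in any one acceptance variable. It is strictly decreasing on $[0,1)$ (since $\alpha>0$) and jumps at $\theta=1$. So intermediate thresholds do not tie with $\theta=0$; they are strictly worse, and no affineness argument is needed. The maximum over $[0,1)$ is attained only at $\theta=0$, with value $\beta+\alpha$. Comparing this with $V(1)=\beta+\gamma c_{\mathcal{G}}(1)$ gives the trichotomy, with exactly $\{0,1\}$ optimal in the tie case.

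If you instead keep the top-of-scale model, your own observation that $\theta\in(0,1)$ ties with $\theta=0$ has a cost. Every $\theta\in[0,1)$ becomes an equilibrium threshold when $\alpha>\gamma c_{\mathcal{G}}(1)$, and every $\theta\in[0,1]$ when $\alpha=\gamma c_{\mathcal{G}}(1)$. The statement then fails as written. Leader-side tie-breaking ``toward the stated endpoints'' would be an extra assumption, not a consequence of the game.

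The tie-breaking the argument actually needs is on the followers' side, and Step~3 does not address it. Step~3 checks that your responses are \emph{a} best response at $\theta^\star$, but the leader's objective requires a specified response at every $\theta$. At $\theta=1$ the sneaky searcher's utility $\mathbb{I}[h_S>\theta]$ is identically zero, and the helpful searcher is indifferent over all $h_H\le\theta$. Because $c_{\mathcal{G}}(h_H)$ and $c_{\mathcal{G}}(h_S)$ enter the generator's payoff, a different selection changes $V(1)$ and hence the threshold in the statement. For example, $h_H=1$ or $h_S<1$ at $\theta=1$ would do this. The paper resolves this with lexicographic tie-breaking (smallest $h_H$, largest $h_S$), which gives $\mathrm{BR}_H\equiv0$ and $\mathrm{BR}_S\equiv1$ for all $\theta$. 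It then argues directly from $V$ as above.
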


The proposition shows that with high $\alpha$, insufficient evidence pushes the generator toward a conservative threshold, while with high $\gamma$, hard sufficient evidence pushes it to expand its answerable region. In particular, without hard positives, the equilibrium collapses to $\theta^\star=0$. This aligns with \Cref{fig:plots_b}, and motivates our noise augmentation as a cheap substitute for training an explicit adversarial searcher.

This condition also depends on $c_{\mathcal{G}}(1)$, so hard-positives help only if the generator remains capable of answering correctly from noisy sufficient evidence. We therefore experiment adding a small SFT loss on hard-positive samples to preserve this capability. Empirically, this term improves non-abstention on difficult answerable cases and yields substantial gains. Full results are given in Appendix~\ref{sec:sft_loss}.

\section{Conclusion}
\label{sec:conclusion}
We presented \textbf{DAC}, a role-decomposed, cooperative multi-agent training framework for agentic search. A key component is cross supervision, where the generator's abstention decisions serve as a verification signal for the searcher, while the searcher's diverse retrieval shapes the generator's answering behavior. We analyze the instability of this system through a Stackelberg-game interpretation, proposing hard-positive evidence augmentation to mitigate possible over-abstention. 
DAC outperforms baselines on most general and multi-hop QA benchmarks using only lightweight LoRA adapters, suggesting that role decomposition with aligned cross-agent rewards enables more stable and efficient training than monolithic approaches.

Current experiments are limited to short-form QA. A natural direction for future work is applying DAC to longer-form or open-ended agentic tasks, which may require more complex reasoning and tool-use.
To support such settings, one could replace EM-based sufficiency checks with LLM-based judgment for more reliable verification. Additionally, one could further decompose the searcher and generator each into specialized sub-agents trained with centralized rewards, while preserving cross-verification signals across groups.

\bibliographystyle{plainnat_with_preprint}
\bibliography{references}

\newpage
\appendix
\section{Stackelberg formulation of DAC}
\label{sec:proof}

We formalize the game-theoretic interpretation of DAC mentioned in \cref{sec:augmentation} and \cref{sec:pvg}. Here, we consider an idealized version of our training setup. Recall that our noise-based evidence augmentation can be interpreted as a lightweight approximation to training an additional adversarial agent. Under this interpretation, the system consists of three players: a \textbf{helpful searcher} $\mathcal{S}_H$, a \textbf{sneaky searcher} $\mathcal{S}_S$, and a \textbf{generator} $\mathcal{G}$. The helpful searcher aims to produce sufficient evidence that the generator accepts and answers from, whereas the sneaky searcher aims to produce sufficient but difficult evidence that the generator incorrectly rejects by abstaining.

We summarize a sufficient evidence set by a scalar \textbf{difficulty} $h \in \qty[0, 1]$, where smaller values indicate easier evidence and larger values indicate harder evidence. The helpful and sneaky searchers choose difficulties $h_H, h_S \in [0,1]$, respectively. The generator chooses a verification \textbf{threshold} $\theta \in \qty[0, 1]$, and answers on a sufficient evidence set of difficulty $h$ if and only if $h \leq \theta$. Also, let $c_{\mathcal{G}} : [0,1] \to [0,1]$ denote the generator's \textbf{correctness function}, where $c_{\mathcal{G}}(h)$ is the probability that the generator answers correctly when given a sufficient evidence set of difficulty $h$. We assume that $c_{\mathcal{G}}$ is weakly decreasing and that $c_{\mathcal{G}}(0)=1$, since harder evidence is harder to answer from.

In a batch training setting, the generator is updated on a mixture of evidence states. We abstract this mixture using three effective generator-side weights: $\alpha$ for insufficient-evidence examples, $\beta$ for easy sufficient examples associated with the helpful searcher, and $\gamma$ for hard sufficient examples associated with the sneaky searcher. Insufficient examples may arise from failed search trajectories or constructed insufficient inputs. We aggregate these cases into the generator-side weight $\alpha$, rather than modeling search failure explicitly, since our toy game is intended to isolate the generator's verification boundary conditional on sufficient evidence. The helpful and sneaky searchers therefore represent easy and hard sufficient cases, with generator-side weights $\beta$ and $\gamma$, respectively.

In a batch training setting, suppose the generator receives insufficient evidence, helpful-sufficient evidence, and sneaky-sufficient evidence with positive weights $\alpha,\beta,\gamma$, respectively. We define the utilities as
\begin{align*}
    U_H\qty(h_H, \theta) &= \mathbb{I}\bigl[h_H \leq \theta\bigr], \\
    U_S\qty(h_S, \theta) &= \mathbb{I}\bigl[h_S > \theta\bigr], \\
    U_{\mathcal{G}}\qty(h_H, h_S, \theta) &= \alpha\qty(1-\theta) + \beta c_\mathcal{G}\qty(h_H)\mathbb{I}\bigl[h_H \leq \theta\bigr] + \gamma c_\mathcal{G}\qty(h_S)\mathbb{I}\bigl[h_S \leq \theta\bigr].
\end{align*}
The helpful searcher wants sufficient evidence on which the generator \emph{does not} abstain, and the sneaky searcher wants sufficient evidence on which the generator \emph{does} abstain. The generator wants to be correct on sufficient evidence from either searcher, while still abstaining on insufficient evidence.

We model this as a generator-leader Stackelberg game. The generator first commits to a threshold $\theta$, and the two searchers then best respond. We assume lexicographic tie-breaking: among utility-maximizing choices, the helpful searcher chooses the smallest difficulty, while the sneaky searcher chooses the largest difficulty. Thus, the follower best-response maps are
\begin{equation*}
    \mathrm{BR}_H(\theta) = \min\qty(\argmax_{h_H \in [0,1]} U_H\qty(h_H,\theta)), \qquad
    \mathrm{BR}_S(\theta) = \max\qty(\argmax_{h_S \in [0,1]} U_S\qty(h_S,\theta)).
\end{equation*}
The generator then chooses its threshold while anticipating these responses. We can now define the notion of an \emph{equilibrium} in this setup.

\begin{definition}[Stackelberg equilibrium]
A triple $(h_H^\star,h_S^\star,\theta^\star)$ is a \textbf{Stackelberg equilibrium} of the induced game if
\begin{equation}
    h_H^\star = \mathrm{BR}_H\qty(\theta^\star), \quad
    h_S^\star = \mathrm{BR}_S\qty(\theta^\star), \quad
    \theta^\star \in \argmax_{\theta \in [0,1]}U_{\mathcal{G}}\bigl(\mathrm{BR}_H(\theta), \mathrm{BR}_S(\theta), \theta\bigr).
\end{equation}
\end{definition}

We can further derive the closed-form equilibria of this game.
\begin{theorem} \label{thm:stackelberg}
    Under the lexicographic tie-breaking rule, the set of Stackelberg equilibria of the induced game is given by:
    \begin{equation}
        \qty(h_H^\star, h_S^\star, \theta^\star)= \begin{cases}
            (0, 1, 0), & \text{if }\alpha>\gamma c_\mathcal{G}(1),\\
            (0, 1, 1), & \text{if }\gamma c_\mathcal{G}(1)>\alpha,\\
            (0, 1, 0) \text{ or } (0, 1, 1), & \text{if }\alpha=\gamma c_\mathcal{G}(1).
        \end{cases}
    \end{equation}
\end{theorem}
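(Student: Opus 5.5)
The plan is to solve the game by backward induction. First I compute the followers' tie-broken best-response maps $\mathrm{BR}_H$ and $\mathrm{BR}_S$ explicitly for every $\theta\in[0,1]$. Then I substitute them into $U_{\mathcal{G}}$ to get a one-variable function of $\theta$ and maximize it. Finally I read off the equilibrium triples from the Definition.

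\textbf{Step 1 (helpful searcher).} For any $\theta\in[0,1]$, $U_H(h_H,\theta)=1$ exactly on $[0,\theta]$. This set is nonempty because $\theta\ge 0$, so $\argmax_{h_H}U_H=[0,\theta]$ and the minimum-difficulty tie-break gives $\mathrm{BR}_H(\theta)=0$ for all $\theta$.

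\textbf{Step 2 (sneaky searcher).} I split into two cases.
\begin{itemize}
\item If $\theta<1$, then $U_S=1$ exactly on $(\theta,1]$. This set is nonempty and contains $1$, so the argmax is $(\theta,1]$ and its maximum is $1$.
\item If $\theta=1$, then $U_S\equiv 0$ on $[0,1]$. The argmax is all of $[0,1]$, and the maximum-difficulty tie-break again selects $1$.
\end{itemize}
Hence $\mathrm{BR}_S(\theta)=1$ for all $\theta$. The point to check is that the argmax sets are nonempty and the tie-breaking extremum is attained in both cases. This is the only place where the lexicographic rule matters.

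\textbf{Step 3 (leader's problem).} Substituting, and using $c_{\mathcal{G}}(0)=1$ together with $\mathbb{I}[0\le\theta]=1$, the reduced objective is
\begin{equation*}
F(\theta)=U_{\mathcal{G}}\bigl(0,1,\theta\bigr)=\alpha(1-\theta)+\beta+\gamma c_{\mathcal{G}}(1)\,\mathbb{I}[\theta=1].
\end{equation*}
On $[0,1)$ the function is $\alpha(1-\theta)+\beta$, which is strictly decreasing since $\alpha>0$. So its maximum over $[0,1)$ is attained only at $\theta=0$, with value $\alpha+\beta$. At $\theta=1$ the value is $\beta+\gamma c_{\mathcal{G}}(1)$. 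Therefore $\argmax F$ is determined by the three cases:
\begin{itemize}
\item $\{0\}$ if $\alpha>\gamma c_{\mathcal{G}}(1)$;
\item $\{1\}$ if $\gamma c_{\mathcal{G}}(1)>\alpha$;
\item $\{0,1\}$ if the two are equal.
\end{itemize}
No interior $\theta$ is ever optimal. Combining with $h_H^\star=\mathrm{BR}_H(\theta^\star)=0$ and $h_S^\star=\mathrm{BR}_S(\theta^\star)=1$ gives exactly the triples listed in Theorem~\ref{thm:stackelberg}. Conversely, each listed triple satisfies the three conditions of the Definition, so the set of equilibria is exactly as stated. This also yields Proposition~\ref{prop:stackelberg_main} as a corollary.

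\textbf{Main obstacle.} The main subtlety is Step 2 at the boundary $\theta=1$, where the sneaky searcher's utility is identically zero and only the tie-breaking rule pins down $h_S=1$. A related point is the discontinuity of $F$ at $\theta=1$. It must be handled by directly comparing the value at $\theta=1$ with the maximum over $[0,1)$, rather than by any continuity or derivative argument.
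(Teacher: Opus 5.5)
Your proposal is correct and follows essentially the same route as the paper's proof. Both compute the tie-broken best responses $\mathrm{BR}_H\equiv 0$ and $\mathrm{BR}_S\equiv 1$, treating $\theta=1$ separately, then reduce the leader's objective to $\alpha(1-\theta)+\beta+\gamma c_{\mathcal{G}}(1)\mathbb{I}[\theta=1]$ and compare the value $\beta+\alpha$ at $\theta=0$ with $\beta+\gamma c_{\mathcal{G}}(1)$ at $\theta=1$; your explicit use of $\alpha>0$ to rule out interior thresholds is a small clarification the paper leaves implicit.
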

\begin{proof}
Fix a threshold $\theta \in [0,1]$. We first characterize the follower best responses.

For the helpful searcher, for any fixed $\theta$, every $h_H\in[0,\theta]$ maximizes $U_H$, while every $h_H>\theta$ gives utility $0$. Under the lexicographic tie-breaking rule, the helpful searcher chooses the smallest maximizing difficulty. Hence,
\begin{equation*}
    \mathrm{BR}_H(\theta)=0 \qquad \text{for all } \theta\in[0,1].
\end{equation*}

For the sneaky searcher, if $\theta<1$, then every $h_S\in(\theta,1]$ maximizes $U_S$, and the lexicographic tie-breaking rule selects the largest such difficulty, namely $h_S=1$. If $\theta=1$, then $U_S(h_S,1)=0$ for every $h_S\in[0,1]$, so every $h_S$ is utility-maximizing; the tie-breaking rule again selects $h_S=1$. Thus,
\begin{equation*}
    \mathrm{BR}_S(\theta)=1 \qquad \text{for all } \theta\in[0,1].
\end{equation*}

We now compute the generator's induced objective after substituting the follower best responses. Since $c_{\mathcal G}(0)=1$, we have
\begin{align*}
    U_{\mathcal G}\bigl(\mathrm{BR}_H(\theta), \mathrm{BR}_S(\theta), \theta\bigr)
    &= U_{\mathcal G}(0,1,\theta) \\
    &= \alpha(1-\theta) + \beta c_{\mathcal G}(0)\mathbb{I}\qty[0\leq \theta] + \gamma c_{\mathcal G}(1)\mathbb{I}\qty[1\leq \theta] \\
    &= \alpha(1-\theta) + \beta + \gamma c_{\mathcal G}(1)\mathbb{I}\qty[\theta=1].
\end{align*}

Therefore, for $\theta<1$, the induced value is $V(\theta) = \beta + \alpha(1-\theta)$, which is maximized over $[0,1)$ at $\theta=0$, with value $V(0) = \beta + \alpha$.
At $\theta=1$, the induced value is $V(1) = \beta + \gamma c_{\mathcal G}(1)$.
Thus, the generator only needs to compare the two candidate values
\begin{equation*}
    V(0) = \beta + \alpha \quad \text{and} \quad V(1) = \beta + \gamma c_{\mathcal G}(1).
\end{equation*}

If $\alpha>\gamma c_{\mathcal G}(1)$, then $V(0)>V(1)$, so the unique optimal threshold is $\theta^\star=0$. Since the follower best responses are $\mathrm{BR}_H(0)=0$ and $\mathrm{BR}_S(0)=1$, the corresponding Stackelberg equilibrium is
\begin{equation*}
    (h_H^\star,h_S^\star,\theta^\star)=(0,1,0).
\end{equation*}

If $\gamma c_{\mathcal G}(1)>\alpha$, then $V(1)>V(0)$, so the unique optimal threshold is $\theta^\star=1$. Since $\mathrm{BR}_H(1)=0$ and $\mathrm{BR}_S(1)=1$, the corresponding Stackelberg equilibrium is
\begin{equation*}
    (h_H^\star,h_S^\star,\theta^\star)=(0,1,1).
\end{equation*}

Finally, if $\alpha=\gamma c_{\mathcal G}(1)$, then $V(0)=V(1)$, and both $\theta^\star=0$ and $\theta^\star=1$ are optimal. Hence both
\begin{equation*}
    (h_H^\star,h_S^\star,\theta^\star)=(0,1,0) \quad \text{and} \quad (h_H^\star,h_S^\star,\theta^\star)=(0,1,1)
\end{equation*}
are Stackelberg equilibria.

This proves the claimed characterization.
\end{proof}

The theorem shows that the sneaky searcher plays a central role in determining the generator's equilibrium verification threshold. Under the follower tie-breaking rules, the helpful searcher always chooses the easiest sufficient evidence, $h_H^\star=0$, while the sneaky searcher chooses the hardest sufficient evidence, $h_S^\star=1$. Therefore, the generator's threshold is determined by a trade-off between the reward for abstaining on insufficient evidence, weighted by $\alpha$, and the reward for answering on hard but sufficient evidence, weighted by $\gamma c_{\mathcal{G}}(1)$. That is, when the generator is sufficiently accurate on hard-evidence inputs and the effective proportion of hard-sufficient examples is large relative to insufficient examples, the equilibrium shifts toward a larger verification threshold. In this regime, we expect the generator to abstain less often on hard but sufficient evidence.

In particular, without the sneaky searcher, i.e., when $\gamma=0$, the equilibrium deterministically collapses to the trivial easy-positive regime
\begin{equation}
    (h_H^\star,\theta^\star) = (0,0).
\end{equation}
In this regime, the generator has no incentive to expand its answerable sufficient-evidence region beyond the easiest sufficient evidence. This explains the lagging non-abstention recall observed in \Cref{fig:plots_b}, which leads to poor test-time performance even when abstention is disabled.

\section{Training details}
\label{sec:training_details}

\subsection{DAC}

We train DAC on \texttt{Qwen2.5-7B-Instruct} and \texttt{Qwen3-8B}. For \texttt{Qwen3-8B}, we use the non-reasoning mode. RL training is implemented with verl \citep{verl}. The training engine uses FSDP2 \citep{fsdp} with CPU offloading and gradient checkpointing, and rollouts are generated with vLLM \citep{vllm} using tensor parallel size $1$. Both the searcher and generator use separate LoRA adapters with rank $32$ and $\alpha=16$.

For rollouts, we use temperature $1.0$ and top-$p$ $1.0$. We train on $8$ H100 GPUs with global batch size $512$, mini-batch size $128$, and micro-batch size $16$, without dynamic batch sizing. We use AdamW \citep{adamw} with $\beta_1=0.9$, $\beta_2=0.999$, and weight decay $0.01$. For \texttt{Qwen2.5-7B-Instruct}, both agents use learning rate $1\times 10^{-6}$. For \texttt{Qwen3-8B}, the searcher uses learning rate $2\times 10^{-6}$ and the generator uses learning rate $1\times 10^{-5}$.

We train for $500$ steps and save checkpoints every $100$ steps. In any case that training diverges, we evaluate the last stable checkpoint according to the generator agent's reward curve. However, we have not observed any symptoms of training divergence in DAC, so we always evaluated the final checkpoint at step $500$.

For the RL training objectives, we use the token level policy gradient losses \citep{dapo}:
\begin{align*}
    \mathcal{J}_{\text{PPO}}(\theta) &= \mathbb{E}_{\begin{subarray}{l}x \sim \mathcal{D} \\ y \sim \pi_0(\cdot \mid x)\end{subarray}}\qty[\frac{1}{\abs{y}}\sum_{t=1}^{\abs{y}}\mathcal{C}_\epsilon\qty(\frac{\pi_\theta(y_t \mid x, y_{<t})}{\pi_0(y_t \mid x, y_{<t})}, A^{\text{GAE}}_t)], \\
    \mathcal{J}_{\text{GRPO}}(\theta) &= \mathbb{E}_{\begin{subarray}{l}x \sim \mathcal{D} \\ \qty{y_i}_{i=1}^{G} \sim \pi_0(\cdot \mid x)\end{subarray}}\qty[\frac{1}{\sum_{i=1}^{G}\abs{y_i}}\sum_{i=1}^{G}\sum_{t=1}^{\abs{y_i}}\mathcal{C}_\epsilon\qty(\frac{\pi_\theta(y_{i,t} \mid x, y_{i,<t})}{\pi_0(y_{i,t} \mid x, y_{i,<t})}, A^{\text{GRPO}}_i)],
\end{align*}
where $\mathcal{C}_\epsilon$ is the clipping function
\begin{equation*}
    \mathcal{C}_\epsilon\qty(\rho, A) = \min\qty(\rho A, \operatorname{clip}\qty(\rho, 1-\epsilon, 1+\epsilon)A).
\end{equation*}
For minibatch and microbatch aggregation, we normalize over the entire number of tokens in each per-GPU microbatch, and then take the mean across microbatches.
Here $\mathcal{D}$ is a dataset of prompts, $\pi_\theta$ is the running policy, $\pi_0$ is the rollout policy, and $A^\text{GAE}$ and $A^\text{GRPO}$ are each the advantage functions of GAE and GRPO.

We use no KL penalty and set the clipping ratio $\epsilon$ to $0.2$. For PPO, we set the GAE hyperparameters $\gamma=1$ and $\lambda=1$. When training the generator with GRPO, we use group size $G=5$.

The EM-based verification scores for answer correctness and evidence sufficiency are computed as follows. After basic text normalization, an answer is correct if it exactly matches the gold answer, and an evidence set is sufficient if any retrieved document contains the gold answer as an exact string match.

For noise-based evidence augmentation, we retrieve the top-$15$ passages using the original question, remove passages already in the evidence set, and select three remaining passages in reverse rank order as distractors. These are added to the original evidence for generator training.

\subsection{Baselines}

For Search-R1 \citep{searchr1}, we use the official codebase \footnote{\url{https://github.com/PeterGriffinJin/Search-R1}} and keep all hyperparameters as specified in the original paper. Following its checkpoint selection protocol, we evaluate the last checkpoint before training collapse. Since we observe that training often collapses abruptly, we save checkpoints every $20$ steps rather than every $100$ steps, which was the original setting in the paper. For LoRA-based Search-R1 experiments, we use LoRA adapters of rank $32$ or $64$ with $\alpha=2 \times r$, and scale the learning rate by $10\times$ following common LoRA tuning guidance \citep{schulman2025lora}. All Search-R1 baselines are trained on $8$ H100 GPUs.

For IRCoT \citep{ircot}, we use \texttt{Qwen2.5-7B-Base} instead of \texttt{Qwen2.5-7B-Instruct}, since the instruction-tuned model performs poorly under the IRCoT-style generation format. For \texttt{Qwen3-8B}, we use the same backbone as in the main experiments.

\section{Additional test results}

\subsection{Baseline results with LoRA}
\label{sec:lora_results}

\begin{table}[tb]
\centering
\caption{\textbf{LoRA training results of Search-R1 on QA benchmarks.} We report exact match (EM) and LLM-judge accuracy (LLM). Bold numbers indicate the best result within each model block. Datasets marked with $^\ast$ are in-domain training datasets. For Search-R1, O denotes outcome-only rewards, and OF denotes outcome-plus-format rewards. We test Search-R1 with LoRA adapters of rank $32$ and $64$, while DAC is trained with two distinct LoRA adapters of rank $32$.}
\label{tab:lora_results}
\vskip 1em
\setlength{\tabcolsep}{3pt}
\resizebox{\textwidth}{!}{%
\begin{tabular}{llcccccccccccccccc}
\toprule
\multirow{4}{*}{Method} & \multirow{4}{*}{Rank} & \multicolumn{6}{c}{General QA} & \multicolumn{8}{c}{Multi-Hop QA} & \multicolumn{2}{c}{\multirow{2.5}{*}{Average}} \\
\cmidrule(lr){3-8} \cmidrule(lr){9-16}
& & \multicolumn{2}{c}{NQ{$^*$}} & \multicolumn{2}{c}{TriviaQA} & \multicolumn{2}{c}{PopQA} & \multicolumn{2}{c}{HotpotQA\!{$^*$}} & \multicolumn{2}{c}{2Wiki} & \multicolumn{2}{c}{MuSiQue} & \multicolumn{2}{c}{Bamb.} & \\
\cmidrule(lr){3-4} \cmidrule(lr){5-6} \cmidrule(lr){7-8} \cmidrule(lr){9-10} \cmidrule(lr){11-12} \cmidrule(lr){13-14} \cmidrule(lr){15-16} \cmidrule(lr){17-18}
& & EM & LLM & EM & LLM & EM & LLM & EM & LLM & EM & LLM & EM & LLM & EM & LLM & EM & LLM \\
\midrule
\rowcolor[HTML]{FFF2CC} \multicolumn{18}{c}{\texttt{Qwen2.5-7B-Instruct}} \\
\midrule
Search-R1 (PPO/O)   & \multirow{4}{*}{32} & 38.4 & 53.3 & 61.3 & 71.8 & 39.7 & 45.3 & 34.5 & 49.9 & 32.8 & 37.7 & 13.5 & 19.5 & 37.6 & 47.2 & 36.8 & 46.4 \\
Search-R1 (PPO/OF)  &                     & 37.1 & 48.1 & 56.9 & 65.5 & 33.7 & 38.0 & 28.0 & 39.6 & 20.7 & 23.3 & 8.0  & 10.7 & 26.4 & 33.6 & 30.1 & 37.0 \\
Search-R1 (GRPO/O)  &                     & 41.1 & 55.8 & \textbf{61.4} & 71.6 & 40.2 & 45.7 & 36.4 & 51.5 & 31.4 & 36.1 & 16.1 & 22.5 & 32.8 & 46.4 & 37.1 & 47.1 \\
Search-R1 (GRPO/OF) &                     & 34.5 & 53.8 & 58.1 & 69.2 & 37.0 & 43.8 & 31.7 & 46.4 & 32.6 & 37.7 & 12.2 & 18.0 & 32.0 & 40.0 & 34.0 & 44.1 \\
\midrule
Search-R1 (PPO/O)   & \multirow{4}{*}{64} & 38.5 & 52.1 & 61.2 & 70.8 & 39.3 & 44.6 & 33.9 & 49.0 & 32.9 & 37.4 & 12.5 & 19.0 & 36.8 & 44.8 & 36.4 & 45.4 \\
Search-R1 (PPO/OF)  &                     & 36.2 & 55.3 & 59.9 & 72.2 & 38.3 & 45.2 & 34.6 & 50.5 & 31.2 & 37.1 & 13.2 & 19.5 & 40.0 & 51.2 & 36.2 & 47.3 \\
Search-R1 (GRPO/O)  &                     & 39.2 & 55.2 & \textbf{61.4} & 72.0 & 40.6 & 46.3 & 36.0 & 51.6 & 32.2 & 37.6 & 14.3 & 20.0 & 38.4 & 47.2 & 37.4 & 47.1 \\
Search-R1 (GRPO/OF) &                     & 33.7 & 53.8 & 57.6 & 70.0 & 36.5 & 44.1 & 32.1 & 47.4 & 32.1 & 37.9 & 12.6 & 19.1 & 30.4 & 42.4 & 33.6 & 45.0 \\
\midrule
\textbf{DAC}       & 32$\times$2              & \textbf{41.9} & \textbf{59.4} & 61.2 & \textbf{73.5} & \textbf{41.1} & \textbf{47.6} & \textbf{40.3} & \textbf{57.0} & \textbf{35.4} & \textbf{41.4} & \textbf{17.2} & \textbf{23.7} & \textbf{41.6} & \textbf{52.0} & \textbf{39.8} & \textbf{50.7} \\
\midrule
\rowcolor[HTML]{FFF2CC} \multicolumn{18}{c}{\texttt{Qwen3-8B}} \\
\midrule
Search-R1 (PPO/O)   & \multirow{4}{*}{32} & 39.9 & 55.2 & 64.0 & 73.9 & 43.8 & 49.0 & 40.6 & 57.5 & 41.5 & 46.6 & 18.3 & 24.8 & 42.4 & 52.0 & 41.5 & 51.3 \\
Search-R1 (PPO/OF)  &                     & 41.1 & 56.8 & 65.2 & 75.4 & 43.8 & 48.9 & 40.9 & 57.7 & 41.8 & 47.3 & 17.7 & 25.0 & 46.4 & 56.8 & 42.4 & 52.6 \\
Search-R1 (GRPO/O)  &                     & 41.7 & 59.0 & 66.0 & 76.4 & 43.9 & 49.0 & 42.1 & 59.7 & 43.4 & 48.8 & 18.9 & 26.7 & \textbf{51.2} & \textbf{61.6} & 43.9 & 54.5 \\
Search-R1 (GRPO/OF) &                     & 41.6 & 59.1 & \textbf{67.9} & \textbf{78.5} & 44.2 & 49.3 & 42.7 & 60.9 & \textbf{48.1} & \textbf{53.3} & 19.2 & 27.3 & 44.8 & 56.8 & 44.1 & 55.0 \\
\midrule
Search-R1 (GRPO/OF) & \multirow{4}{*}{64} & 41.2 & 55.8 & 64.5 & 74.1 & 44.5 & 49.2 & 40.4 & 57.6 & 41.6 & 46.7 & 17.1 & 24.1 & 47.2 & 56.0 & 42.4 & 51.9 \\
Search-R1 (GRPO/OF) &                     & 40.6 & 58.0 & 64.6 & 75.4 & 42.0 & 47.7 & 39.9 & 57.0 & 40.2 & 45.7 & 16.5 & 23.5 & 48.0 & 58.4 & 41.7 & 52.2 \\
Search-R1 (GRPO/OF) &                     & 41.9 & 58.4 & 67.0 & 77.4 & 44.3 & 49.5 & 42.0 & 59.7 & 44.7 & 50.2 & 19.4 & 28.3 & 52.0 & \textbf{61.6} & 44.5 & 55.0 \\
Search-R1 (GRPO/OF) &                     & 41.1 & 58.3 & 67.5 & 78.0 & 43.7 & 49.1 & 42.5 & 60.4 & 47.6 & 52.9 & 18.5 & 27.3 & 49.6 & 56.8 & 44.4 & 54.7 \\
\midrule
\textbf{DAC}       & 32$\times$2              & \textbf{49.2} & \textbf{62.8} & 66.4 & 77.8 & \textbf{47.9} & \textbf{52.9} & \textbf{45.7} & \textbf{62.8} & 46.0 & 52.8 & \textbf{21.1} & \textbf{28.5} & 49.6 & \textbf{61.6} & \textbf{46.6} & \textbf{57.0} \\
\bottomrule
\end{tabular}}
\end{table}

To provide a fairer comparison under the same parameter-efficient training setting, we additionally train Search-R1 \citep{searchr1} using LoRA adapters of rank $64$. All other settings follow the main Search-R1 baseline. As shown in \Cref{tab:lora_results}, LoRA-based Search-R1 mostly underperforms our method, showing larger gaps compared to the results from full finetuning.

\subsection{Accuracy when forcing abstained examples to answer}
\label{sec:abstain_forced}

When the generator chooses to abstain and is forced to produce an answer, accuracy drops substantially compared to non-abstained examples. As shown in \Cref{tab:forced_answer}, the forced answer accuracy on abstained examples averages 22.12\% across benchmarks, compared to 60.86\% selective accuracy on non-abstained examples. This confirms that abstention decisions reflect genuine uncertainty about evidence sufficiency rather than arbitrary behavior.

\subsection{Role decomposition analysis}
\label{sec:decomp_analysis}

To isolate the effect of generator training from search quality, we run a controlled generation experiment, where we fix the retrieved documents from Search-R1's test trajectories and evaluate our trained generators on them. For this comparison, we use the outcome-only reward GRPO version of Search-R1 on \texttt{Qwen2.5-7B-Instruct}, which is the strongest among the family. The results are reported in \Cref{tab:fixed_retrieval}.

\begin{table}[tb]
    \centering
    \caption{\textbf{Forced answer accuracy on abstained examples (\texttt{Qwen2.5-7B-Instruct}).}}
    \label{tab:forced_answer}
    \vskip 1em
    \begin{tabular}{lc}
        \toprule
        Benchmark & Forced Answer Accuracy \\
        \midrule
        TriviaQA          & 29.81\% \\
        HotpotQA          & 27.03\% \\
        NQ                & 22.92\% \\
        PopQA             & 19.51\% \\
        2WikiMultiHopQA   & 20.67\% \\
        Bamboogle         & 24.00\% \\
        Musique           & 10.88\% \\
        \midrule
        Average           & 22.12\% \\
        \bottomrule
    \end{tabular}
\end{table}
\begin{table}[tb]
\centering
\caption{\textbf{Answer generation with fixed retrieval on \texttt{Qwen2.5-7B-Instruct}.} We fix the retrieved documents from Search-R1 (GRPO/O) and evaluate different generation policies. Ours, Decomp-AR, and Decomp-SR are reported as differences relative to Search-R1. \textcolor{mutedgreen}{Green} indicates a difference greater than +1, \textcolor{mutedred}{red} indicates a difference less than -1, and \textbf{bold} indicates best performance.}
\label{tab:fixed_retrieval}
\vskip 1em
\setlength{\tabcolsep}{3pt}
\resizebox{\textwidth}{!}{%
\begin{tabular}{lcccccccccccccccc}
\toprule
\multirow{4}{*}{Method} & \multicolumn{6}{c}{General QA} & \multicolumn{8}{c}{Multi-Hop QA} & \multicolumn{2}{c}{\multirow{2.5}{*}{Average}} \\
\cmidrule(lr){2-7} \cmidrule(lr){8-15}
& \multicolumn{2}{c}{NQ{$^*$}} & \multicolumn{2}{c}{TriviaQA} & \multicolumn{2}{c}{PopQA} & \multicolumn{2}{c}{HotpotQA\!{$^*$}} & \multicolumn{2}{c}{2Wiki} & \multicolumn{2}{c}{MuSiQue} & \multicolumn{2}{c}{Bamb.} & \\
\cmidrule(lr){2-3} \cmidrule(lr){4-5} \cmidrule(lr){6-7} \cmidrule(lr){8-9} \cmidrule(lr){10-11} \cmidrule(lr){12-13} \cmidrule(lr){14-15} \cmidrule(lr){16-17}
& EM & LLM & EM & LLM & EM & LLM & EM & LLM & EM & LLM & EM & LLM & EM & LLM & EM & LLM \\
\midrule
\textbf{DAC} & \textbf{\perf{+1.8}} & \textbf{\perf{+3.0}} & \textbf{\perf{+1.3}} & \textbf{\perf{+2.2}} & \textbf{\perf{+1.8}} & \textbf{\perf{+2.1}} & \textbf{\perf{+3.0}} & \textbf{\perf{+3.0}} & \textbf{\perf{+3.0}} & \textbf{\perf{+3.6}} & \textbf{\perf{+1.5}} & \textbf{\perf{+2.5}} & \perf{-4.0} & \perf{-3.2} & \textbf{\perf{+1.2}} & \textbf{\perf{+1.8}} \\
Decomp-AR & \perf{-2.1} & \perf{+1.6} & \perf{+0.1} & \perf{+1.9} & \perf{-1.6} & \perf{+0.2} & \perf{+2.3} & \perf{+2.8} & \perf{+1.3} & \perf{+2.4} & \perf{+0.8} & \perf{+1.9} & \perf{-1.6} & \textbf{\perf{0.0}} & \perf{-0.1} & \perf{+1.5} \\
Decomp-SR & \perf{-2.3} & \perf{+1.6} & \perf{-0.1} & \perf{+1.8} & \perf{-1.5} & \perf{+0.4} & \perf{+1.8} & \perf{+2.3} & \perf{+1.7} & \perf{+2.9} & \perf{+0.6} & \perf{+2.2} & \perf{-1.6} & \perf{-2.4} & \perf{-0.2} & \perf{+1.2} \\
\midrule
Search-R1 (Base) & 43.9 & 57.2 & 62.0 & 72.0 & 42.6 & 47.5 & 38.5 & 54.3 & 34.8 & 39.5 & 15.6 & 21.8 & \textbf{40.0} & \textbf{50.4} & 39.6 & 49.0 \\
\bottomrule
\end{tabular}}
\vspace{-10pt}
\end{table}

Under fixed retrieval, Decomp-AR mostly outperforms Search-R1 even though both use the same answer correctness reward. Since retrieval is held constant, this suggests that a dedicated generator learns stronger evidence-grounded answer generation. In single-agent training, this capability may be undervalued since the optimization signal is entangled with the more complex search process.

\section{Further improvements via incorporating SFT loss on augmented samples}
\label{sec:sft_loss}

\begin{table}[t]
\centering
\caption{\textbf{Effect of the auxiliary SFT loss on hard-positive samples.} We report EM and LLM-judge accuracy on \texttt{Qwen2.5-7B-Instruct}. Scores for w/ SFT loss are reported as differences relative to DAC. \textcolor{mutedgreen}{Green} indicates a difference greater than +1, and \textbf{bold} indicates best performance.}
\label{tab:sftloss}
\vskip 1em
\setlength{\tabcolsep}{3pt}
\resizebox{\textwidth}{!}{%
\begin{tabular}{lcccccccccccccccc}
\toprule
\multirow{4}{*}{Method} & \multicolumn{6}{c}{General QA} & \multicolumn{8}{c}{Multi-Hop QA} & \multicolumn{2}{c}{\multirow{2.5}{*}{Average}} \\
\cmidrule(lr){2-7} \cmidrule(lr){8-15}
& \multicolumn{2}{c}{NQ{$^*$}} & \multicolumn{2}{c}{TriviaQA} & \multicolumn{2}{c}{PopQA} & \multicolumn{2}{c}{HotpotQA\!{$^*$}} & \multicolumn{2}{c}{2Wiki} & \multicolumn{2}{c}{MuSiQue} & \multicolumn{2}{c}{Bamb.} & \\
\cmidrule(lr){2-3} \cmidrule(lr){4-5} \cmidrule(lr){6-7} \cmidrule(lr){8-9} \cmidrule(lr){10-11} \cmidrule(lr){12-13} \cmidrule(lr){14-15} \cmidrule(lr){16-17}
& EM & LLM & EM & LLM & EM & LLM & EM & LLM & EM & LLM & EM & LLM & EM & LLM & EM & LLM \\
\midrule
\textbf{Ours} & 41.9 & \textbf{59.4} & 61.2 & 73.5 & 41.1 & 47.6 & 40.3 & 57.0 & 35.4 & 41.4 & 17.2 & 23.7 & 41.6 & \textbf{52.0} & 39.8 & 50.7 \\
\midrule
w/ SFT loss & \textbf{\perf{+3.6}} & \perf{-0.1} & \textbf{\perf{+2.0}} & \textbf{\perf{+0.8}} & \textbf{\perf{+3.4}} & \textbf{\perf{+1.9}} & \textbf{\perf{+1.6}} & \textbf{\perf{+1.1}} & \textbf{\perf{+2.0}} & \textbf{\perf{+1.6}} & \textbf{\perf{+0.9}} & \textbf{\perf{+1.1}} & \perf{+0.8} & \perf{-0.8} & \textbf{\perf{+2.1}} & \textbf{\perf{+0.8}} \\
\bottomrule
\end{tabular}}
\end{table}

As suggested by the Stackelberg analysis in \Cref{sec:pvg}, avoiding over-abstention requires maintaining a high value of $c_{\mathcal{G}}(1)$: the generator should remain capable of answering correctly even on hard, noisy sufficient evidence. To encourage this behavior, we add a small auxiliary SFT loss on augmented hard-positive samples.

Recall that for each sufficient evidence set $\mathcal{E}$, we construct $\mathcal{E}'=\mathcal{E}\cup\mathcal{E}_N$ by adding noise documents. We then select cases where the generator answers correctly on $\mathcal{E}$ but fails on $\mathcal{E}'$, and use the clean-evidence completion as a pseudo-target for the noisy input:
\begin{equation}
    \mathcal{D}_\text{SFT} = \Bigl\{\bigl(Q, \mathcal{E}', \mathcal{G}(Q, \mathcal{E})\bigr) : a\bigl(Q, \mathcal{G}(Q, \mathcal{E})\bigr) = 1, a\bigl(Q, \mathcal{G}(Q, \mathcal{E}')\bigr) = 0\Bigr\}
\end{equation}
We apply a standard prompt-completion SFT loss on this set and add it to the generator's RL loss:
\begin{equation}
    \mathcal{L}_{\mathcal{G}} = \mathcal{L}_{\mathcal{G}}^{\text{GRPO}} - \beta_{\text{SFT}}\sum_{(Q, \mathcal{E}', \mathcal{A}) \in \mathcal{D}}\log\pi_\mathcal{G}\qty(\mathcal{A} \bigm\vert Q, \mathcal{E}'),
\end{equation}
where $\mathcal{L}_{\mathcal{G}}^{\text{GRPO}}$ denotes the GRPO loss, and $\beta_{\text{SFT}}$ controls the strength of the auxiliary SFT term.

We test with $\beta_{\text{SFT}} = 0.05$ on \texttt{Qwen2.5-7B-Instruct} and report the test results in \Cref{tab:sftloss}. We can observe that this auxiliary loss generally improves performance. Furthermore, we also compare the non-abstention recall during training in \Cref{fig:rm_recall_sftloss}. Though the differences are not substantial, we can clearly observe that adding the SFT loss guides the model towards less over-abstention.

However, these results should be interpreted with some caution. The auxiliary SFT loss may improve performance in two ways, since it may directly improve the generator by providing supervised targets on difficult noisy inputs, or it may indirectly improve performance by reducing over-abstention and increasing the model's willingness to answer hard but sufficient cases. Our non-abstention recall analysis for \Cref{fig:rm_recall_sftloss} supports the latter interpretation, but does not fully disentangle these effects.

\begin{figure}[t]
    \centering
    \includegraphics[width=0.5\textwidth]{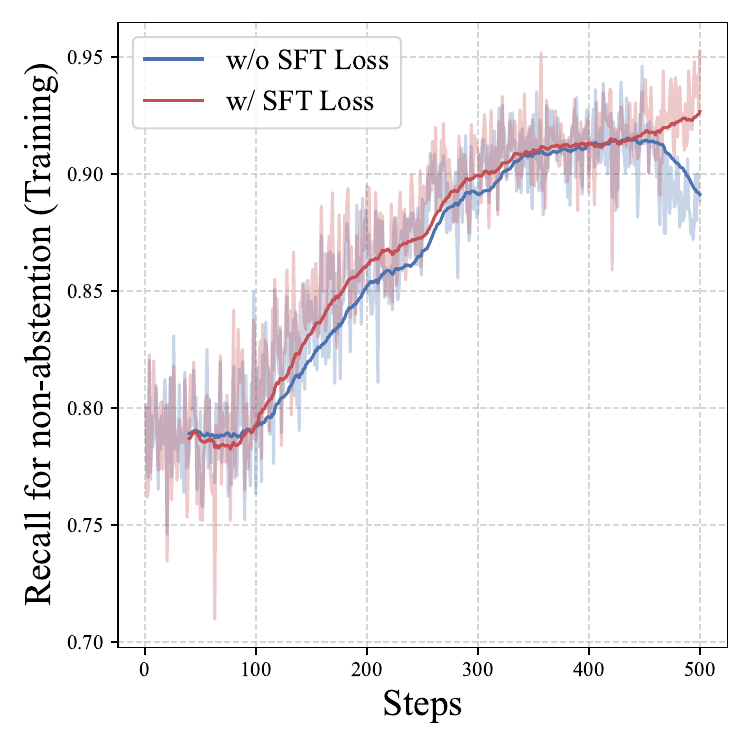}
    \caption{\textbf{Training-time non-abstention recall with and without the auxiliary SFT loss.} Both variants improve over training, but the model trained with the SFT loss maintains higher recall in later stages, suggesting that the auxiliary objective helps mitigate over-abstention on hard but answerable evidence.}
    \label{fig:rm_recall_sftloss}
    \vspace{-10pt}
\end{figure}

\section{Prompt templates}
\label{sec:prompts}

This section presents the prompts used in our experimental setup.

\Cref{prompt:qg,prompt:ag,prompt:ag_val} correspond to the prompts used for the \textbf{searcher agent}, the \textbf{generator agent} during training, and the \textbf{generator agent} during validation, respectively.
Blue-highlighted text in brackets indicates placeholders to be filled in.
Both agents are prompted through a single user-role message. For readability, we present each prompt as an instruction followed by the runtime input fields.
The \textcolor{blue}{\texttt{[think tag]}} is set to \texttt{think} for \texttt{Qwen2.5-7B-Instruct} and \texttt{reason} for \texttt{Qwen3-8B}, since \texttt{<think>} is a reserved token in Qwen3's native thinking mechanism and would conflict with the model's built-in thinking mode.
The searcher agent is initialized with an instruction and the question. After each retrieval turn, the model's reasoning, search query, and retrieved documents are appended to the context as \texttt{<}\textcolor{blue}{\texttt{[think tag]}}\texttt{>...}\texttt{</}\textcolor{blue}{\texttt{[think tag]}}\texttt{>}\texttt{<search>...</search><information>...</information>} blocks.
The training and test prompts for the generator differ in that the training prompt instructs the model to output \texttt{unknown} when the answer cannot be found from the provided documents, enabling abstention, while the test prompt omits this instruction since abstention is disabled at test time.

\begin{prompt}[label=prompt:qg]{Searcher agent}
\setlength{\parskip}{8pt}
\textbf{Instruction:}

You are a research assistant that plans retrieval.
Your job is to collect information to answer the given question.
You must conduct reasoning inside \texttt{<}\textcolor{blue}{\texttt{[think tag]}}\texttt{>} and \texttt{</}\textcolor{blue}{\texttt{[think tag]}}\texttt{>} first every time you get new information.
After reasoning, if you find you lack some knowledge, you can call a search engine by \texttt{<search>} query \texttt{</search>} and it will return the top searched results between \texttt{<information>} and \texttt{</information>}.
You can repeat this process as many times as you want.
If you find no further external knowledge needed, simply output \texttt{<stop>}.

\vskip 10pt
\textbf{Runtime Input:}

Question: \textcolor{blue}{\texttt{[Question]}} \\
\textcolor{blue}{\texttt{[Search history appended after each turn]}}
\end{prompt}

\begin{prompt}[label=prompt:ag]{Generator agent (Training)}
\setlength{\parskip}{8pt}
\textbf{Instruction:}

You are a knowledgeable assistant that answers questions.
You will be provided with a question and some relevant documents.
First, think step-by-step about how to answer the question based on the provided documents inside \texttt{<}\textcolor{blue}{\texttt{[think tag]}}\texttt{>} and \texttt{</}\textcolor{blue}{\texttt{[think tag]}}\texttt{>}.
Then, provide a concise and accurate answer inside \texttt{<answer>} and \texttt{</answer>}.
The answer should be in a short phrase or single word if possible.
If you cannot find the answer from the provided documents, simply answer \texttt{unknown} inside \texttt{<answer>} and \texttt{</answer>}.

\vskip 10pt
\textbf{Runtime Input:}

Question: \textcolor{blue}{\texttt{[Question]}} \\
Documents: \\
\textcolor{blue}{\texttt{[Retrieved documents]}}
\end{prompt}

\begin{prompt}[label=prompt:ag_val]{Generator agent (Test)}
\setlength{\parskip}{8pt}
\textbf{Instruction:}

You are a knowledgeable assistant that answers questions.
You will be provided with a question and some relevant documents.
First, think step-by-step about how to answer the question based on the provided documents inside \texttt{<}\textcolor{blue}{\texttt{[think tag]}}\texttt{>} and \texttt{</}\textcolor{blue}{\texttt{[think tag]}}\texttt{>}.
Then, provide a concise and accurate answer inside \texttt{<answer>} and \texttt{</answer>}.
The answer should be in a short phrase or single word if possible.

\vskip 10pt
\textbf{Runtime Input:}

Question: \textcolor{blue}{\texttt{[Question]}} \\
Documents: \\
\textcolor{blue}{\texttt{[Retrieved documents]}}
\end{prompt}

\Cref{prompt:llm_judge_answer,prompt:llm_judge_retrieval} correspond to the prompts used for LLM-based judgement, for answer correctness and search sufficiency, respectively. Since we prompt \texttt{gpt-oss-120b} with reasoning effort \texttt{medium}, we do not explicitly ask the judge to provide reasoning in the prompt.

\begin{prompt}[label=prompt:llm_judge_answer]{LLM-based judging for answer correctness}
\setlength{\parskip}{8pt}
\textbf{Instruction:}
You will be given a question, answer, and gold answer triple.
Your task is to provide a yes/no answer indicating whether the answer is correct compared to one of the gold answers.
The answer is considered correct if it matches any of the gold answers within a reasonable degree of similarity.
However, the answer should be considered wrong if it is too broad, too narrow, or only partially correct compared to the gold answers.
Please answer with 'yes' or 'no' only.

\vskip 10pt
\textbf{Runtime Input:}

Question: \textcolor{blue}{\texttt{[Question]}} \\
Answer: \textcolor{blue}{\texttt{[Generated answer]}} \\
Gold answers: \textcolor{blue}{\texttt{[Gold answers]}}
\end{prompt}

\begin{prompt}[label=prompt:llm_judge_retrieval]{LLM-based judging for search sufficiency}
\setlength{\parskip}{8pt}
\textbf{Instruction:}
You will be given a question, some documents, and gold answer triple.
Your task is to provide a yes/no answer indicating whether the provided documents contain sufficient information to answer the question with one of the given gold answers.
Please answer with 'yes' or 'no' only.

\vskip 10pt
\textbf{Runtime Input:}

Question: \textcolor{blue}{\texttt{[Question]}} \\
Documents: \textcolor{blue}{\texttt{[Retrieved documents]}} \\
Gold answers: \textcolor{blue}{\texttt{[Gold answers]}}
\end{prompt}

\section{Broader impacts}
\label{sec:broader_impacts}
This work studies cooperative training for tool-using language agents, with the goal of improving evidence-grounded answering in knowledge-intensive tasks. By aligning evidence acquisition with downstream answerability, DAC may benefit applications such as research assistance, education, and technical support, where systems must gather relevant information before producing a response. The generator's sufficiency judgment can also be useful in deployed systems for flagging inadequate evidence, triggering additional retrieval, or avoiding unsupported answers.

However, more capable agentic search systems can also make generated outputs appear more authoritative, even when the retrieved evidence is biased, incomplete, or incorrectly interpreted. Our abstention mechanism helps address insufficient evidence, but does not guarantee factuality or safe deployment, especially in high-stakes domains such as medicine, law, or finance. Such use cases require domain-specific validation and human oversight. Finally, our method relies on external corpora and retrieval tools, so their biases and coverage limitations may propagate to the final answer.

\end{document}